\documentclass[letterpaper]{article} %
\usepackage[draft]{aaai2026}  %
\usepackage{times}  %
\usepackage{helvet}  %
\usepackage{courier}  %
\usepackage[hyphens]{url}  %
\usepackage{graphicx} %
\usepackage{natbib}  %
\usepackage{caption} %
\usepackage{algorithm}
\usepackage{placeins}

\usepackage{newfloat}
\usepackage{listings}
\usepackage{amsmath,amssymb}
\usepackage{booktabs}
\usepackage{subcaption}
\usepackage{multirow}
\usepackage{array}
\usepackage{macros}
\DeclareCaptionStyle{ruled}{labelfont=normalfont,labelsep=colon,strut=off} %
\floatstyle{ruled}
\newfloat{listing}{tb}{lst}{}
\floatname{listing}{Listing}
\defcitealias{socialScienceOneStatement2019}{Social Science One 2019}

\title{Manipulation-Proof Oblivious Audits against Deceptive Model Providers}
\author {
    Augustin Godinot\textsuperscript{\rm 1},
    Sofiane Azogagh\textsuperscript{\rm 2},
    Julien Ferry\textsuperscript{\rm 3},
    Sébastien Gambs\textsuperscript{\rm 4}
}
\affiliations {
    \textsuperscript{\rm 1}INRIA Centre de l'Université de Rennes, France\\
    \textsuperscript{\rm 2}EURECOM, Sophia Antipolis, France\\
    \textsuperscript{\rm 3}CIRRELT \& SCALE-AI Chair in Data-Driven Supply Chains, Department of Mathematics and Industrial Engineering, Polytechnique Montréal, Canada\\
    \textsuperscript{\rm 4}Université du Québec à Montréal, Canada\\
    augustin.godinot@inria.fr, sofiane.azogagh@eurecom.fr, julien.ferry@polymtl.ca, gambs.sebastien@uqam.ca
}

\newcommand{\revision}[1]{#1}

\begin{document}

\maketitle

\begin{abstract}
Audits have emerged as a critical instrument for algorithmic governance, providing a mechanism for external scrutiny and governance of machine learning models.
However, ensuring the integrity of such assessments remains a challenging issue.
For instance in regulatory contexts, audits are typically declared or easily detected, thus enabling model providers to manipulate the process, whether intentionally or inadvertently. 
This vulnerability is particularly acute in the context of fairness evaluations, in which providers can often infer sensitive attributes and strategically equalize allocation rates between groups to satisfy fairness metrics. 
In this paper, we introduce a novel audit protocol designed to significantly increase the post-audit detectability of such manipulations by enabling the auditor to query the model in an oblivious manner. 
Our approach leverages a Private Information Retrieval mechanism to require the provider to label a large set of instances, while preventing it from knowing which subset will ultimately be used for the audit. 
The protocol is efficient, imposes minimal overhead on the auditor, and requires no modification to the audited model, its training procedure, or its inference pipeline.
We provide theoretical guarantees showing that, under this protocol, a provider attempting to hide unfairness must falsify a significantly larger number of responses, thereby increasing both the difficulty and the likelihood of detection of manipulation. 
Experimental results across representative audit scenarios confirm the effectiveness and practicality of our approach.
\end{abstract}

\begin{links}
    \link{Code}{github.com/sofianeazogagh/oblivious_audit}
\end{links}

\section{Introduction}\label{sec:introduction}

\revision{
As machine learning (ML) systems increasingly mediate access to essential resources and public discourse, robust ML governance has become necessary to ensure these systems operate fairly and comply with legal and ethical standards~\cite{chandrasekaranSoKMachineLearning2021}. 
A central pillar of this governance is algorithmic auditing, which allows independent evaluators to scrutinize model behavior~\cite{metaxaAuditingAlgorithmsUnderstanding2021, costanza2022audits}. 
However, effective external auditing is currently hindered by a severe power asymmetry: model providers maintain strict control over system access, training data and usage statistics~\cite{birhaneAIAuditingBroken2024}. 
}

\revision{
While several mechanisms attempt to bridge this transparency gap—such as dedicated research APIs~\cite{entrena2025tiktok}, academic partnerships~\citepalias{socialScienceOneStatement2019} and legally mandated data access—they often face practical limitations and remain vulnerable to provider interference~\cite{bourreeRelevanceAPIsFacing2023}. 
A recent example highlights this fragility: the Meta Ad Library, a tool intended to help regulators and researchers track fraudulent advertisements via keyword searches. 
A 2025 Reuters' report revealed that Meta actively monitored the specific keywords and celebrity names used by Japanese regulators investigating scam ads~\cite{horwitzMetaCreatedPlaybook2025}. 
The company then repeatedly ran those exact searches to preemptively delete the targeted ads in Japan. 
By selectively scrubbing the specific data surfaced by the regulators' queries rather than deploying systematic scam filtering, the provider was able to avoid a regulatory crackdown while avoiding the cost of systematic ad verification. 
This reactive curation creates a false impression of compliance, demonstrating how easily current transparency mechanisms can be manipulated by providers when audits queries are observable by the auditee~\cite{fukuchiFakingFairnessStealthily2020,garciabourreeRobustMLAuditing2025}.
}

\revision{
In this work, we consider a generic regulation setting that captures the dynamics illustrated by the Ad Library example.
A regulator wishes to audit a company providing a prediction service, whether a prediction API or a model used as part of a larger system. 
We assume that the regulator can impose an audit protocol on the provider, provided a reasonable computational cost. 
The provider is modeled as a \emph{malicious} adversary: rather than incurring the cost of genuinely improving its system, it seeks to pass the audit by adversarially flipping a number of predictions, without modifying its deployed model.
Indeed, the central challenge in this setup (regulator-led audits, requiring provider cooperation) is that the audited provider may strategically modify its outputs when answering audit queries, deviating from the outputs it would provide to genuine users.
Not only is this type of manipulation difficult to detect \cite{garciabourreeRobustMLAuditing2025}, but a few modifications of the outputs on the audit set can have a dramatic impact on the audit conclusions (see \Cref{thm:n_manipulation_vanilla_audit}).}
\revision{To address this, we propose a rigorous auditing framework that offers formal theoretical guarantees on the audit’s conclusions and limits the extent of strategic manipulation the provider can perform. While our proposed framework is model- and metric-agnostic, we restrict our presentation to auditing demographic parity in binary classifiers for simplicity, and discuss in \Cref{sec:applicability} how it extends to other settings.
Our main contributions are as follows:
\begin{itemize}
    \item We introduce \acs{respir}, an audit protocol designed to mitigate audit manipulations. 
    The core idea is to hide the actual audit set $S$ within a larger candidate set $C$, provably making manipulation harder for a deceptive provider. The only prerequisite for this protocol is that the audited provider must label a candidate set $C$ larger than the actual audit set $S$.
    \item We prove that \acs{respir} forces a deceptive provider to modify its answers on a substantially larger number of points in the candidate set $C$ in order to fake the same level of demographic parity (\Cref{thm:n_manipulation_respir_audit}). This results in an increased probability of manipulation detection by the auditor.
    \item We characterize how large the candidate set $C$ must be, relative to the audit set $S$, to achieve a desired level of manipulation detection while demonstrating that the protocol does not incur prohibitive computational overhead, making it practical for real-world deployment.
\end{itemize}
}

\paragraph{Outline.}
\revision{First, in \Cref{sec:related_works} we review the background on audits, attempts at making them robust, and Private Information Retrieval, which serves as a core building block of our approach.} Next, in \Cref{sec:auditing_game} we describe the auditing framework on which our work relies, and we explain why standard black-box fairness audits can be vulnerable to manipulation. To address this issue, we introduce in \Cref{sec:o_audit} an oblivious auditing protocol based on PIR, which we call \acs{respir}. We then establish the protocol’s theoretical guarantees in terms of manipulation effort and detectability, and evaluate it experimentally in \Cref{sec:expes}. Finally, we conclude in \Cref{sec:conclusion} with a discussion \revision{regarding our framework's applicability,} the main design principles for hidden-audit-set protocols, and directions for future work.

\section{Background \& Related Work}\label{sec:related_works}

\subsection{\revision{Audits and Manipulations}}
\revision{
\emph{Audits and regulation.}
Independent audits are a key mechanism for holding model providers accountable and fostering healthier algorithmic ecosystems~\cite{metaxaAuditingAlgorithmsUnderstanding2021, vecchione2021algorithmic, costanza2022audits}. 
This view has been adopted by recent regulations: the EU AI Act~\cite{RegulationEU20242024}, DSA~\cite{RegulationEU20222022}, DMA~\cite{RegulationEU20222022a} and related frameworks increasingly mandate external assessments of high-risk AI systems, making algorithmic auditing not only a research concern but a legal requirement.
}

\emph{Practical challenges to auditing.} 
Several incidents highlight the practical difficulties of conducting trustworthy audits.
In some cases, the studied algorithm changed without notice during the study, as in the 2020 Facebook polarization study~\cite{ribeiroFacebookStandardAlgorithm2024}.
In others, promised (or required) data access was repeatedly delayed~\citepalias{socialScienceOneStatement2019} or simply not provided~\cite{EuropeanCommission2025}.
Finally, when data or models are eventually shared, they may contain errors that undermine published academic work~\cite{timberg_facebook_2021} or fail to reflect the reality experienced by users~\cite{entrena2025tiktok, pearson_beyond_2025}.

\revision{
\emph{Manipulability of audit targets.}
Beyond practical obstruction, a growing body of work demonstrates that many components of ML systems can be strategically %
manipulated. 
Training data and sampling procedures can be biased to fake fairness~\cite{fukuchiFakingFairnessStealthily2020}. 
Model explanations can be fairwashed to conceal discriminatory behavior~\cite{aivodjiFairwashingRiskRationalization2019, andersFairwashingExplanationsManifold2020, shamsabadiWashingUnwashableImpossibility2022,fokkemaAttributionbasedExplanationsThat2023}.
Model outputs themselves can be selectively altered to deceive black-box auditors~\cite{bourreeRelevanceAPIsFacing2023, garciabourreeRobustMLAuditing2025}. 
And more broadly, fairness metrics are susceptible to gaming when providers can anticipate the evaluation procedure~\cite{hutchinson2022evaluation, thomas2022reliance}.
}

\subsection{\revision{Related Work on Robust Auditing}}
\revision{Existing work on making audits more robust to manipulation can be organized into three broad categories.}

\revision{
\emph{(1) Data-driven and prior-based approaches.}
Several works propose auditing protocols that leverage prior knowledge or careful query design to limit manipulation. 
\citet{yanActiveFairnessAuditing2022} study active fairness auditing, where the auditor adaptively selects queries to efficiently estimate fairness violations. 
\citet{yadavXAuditTheoreticalLook2023} provide a theoretical framework for auditing with explanations. 
Most closely related to our work, \citet{garciabourreeRobustMLAuditing2025} formally study manipulation-proof auditing and show that auditors with prior knowledge about the ground truth can detect manipulations, provided the provider does not know the exact prior they use.
Complementary to these approaches, our method allows to relax the private prior assumption by cryptographically hiding the auditor's queries.
}

\revision{
\emph{(2) Cryptographic approaches.}
A second line of work uses cryptographic tools to provide verifiable guarantees. 
\citet{shamsabadiConfidentialPROFITTConfidentialPROof2023} propose zero-knowledge proofs of fair training for decision trees. 
\citet{waiwitlikhitTrustlessAuditsRevealing2024} design a protocol based on zero-knowledge proofs that enables trustless audits without revealing models or data. 
\citet{franzese_secure_2025} introduce online fairness certificates via a scalable zero-knowledge proof protocol that verifies fairness with respect to data received during deployment. 
\citet{pentyalaPrivFairLibraryPrivacyPreserving2022} develop privacy-preserving fairness auditing using secure computation. 
These approaches offer strong guarantees but typically require the provider's active cooperation in running the cryptographic protocol and may impose significant computational overhead. In contrast, our proposed framework relies on lightweight PIR primitives and incurs only negligible overhead.
}

\revision{
\emph{(3) Secure hardware.}
A third category relies on trusted execution environments or secure hardware to ensure audit integrity~\cite{park2022fairness}. While promising, these approaches require trust in the hardware manufacturer and physical deployment constraints, which limit their applicability in many regulatory contexts.
}

\subsection{Private Information Retrieval}
\label{sec:PIR}

Private Information Retrieval (PIR) is a cryptographic primitive that enables a client to retrieve an item from a database without revealing to the server which item is being accessed~\cite{ChorKGS98}. 
A large body of work has proposed different PIR constructions depending on the trust and deployment assumptions. 
One approach distributes the database across multiple non-colluding servers to obtain information-theoretic privacy guarantees~\cite{OlumofinG11}. 
Other approaches rely on trusted hardware~\cite{SmithS01} but the most promising ones are homomorphic encryption schemes based on lattice problems such as Learning With Errors (LWE)~\cite{xpir16,sealpir18,mulpir21,fastpir21,onionpir21,spiral22,frodopir23,simplepir23}. 
In these computational PIR based on homomorphic encryption, the client typically encodes its query as an encrypted selection vector, conceptually corresponding to a one-hot vector indicating the desired record index.
Afterwards, the server homomorphically combines this encrypted query with the database contents, which are generally represented as a matrix, which means that answering a query essentially reduces to an encrypted matrix-vector multiplication.
The resulting ciphertext decrypts to the requested record while the server learns nothing about the queried index.
However, this approach generally incurs a computational cost linear in the database size for each query, which severely limits its practicality for large-scale databases.
To overcome this limitation, a recent line of work has introduced so-called \emph{stateful PIR} schemes, in which a significant portion of the computation and communication is shifted to an offline preprocessing phase~\cite{corrigan2020private,kogan2021private,shi2021puncturable,onionpir21,corrigan22single}.
In this stateful PIR, the server first computes and sends to the client a public or semi-public hint (or \emph{digest}) that depends solely on the database and can be reused across multiple queries.
Once this preprocessing step is completed, each individual query can be answered with substantially reduced communication complexity and the server becomes sublinear in the database size.

Beyond achieving sublinear asymptotic costs, stateful PIR techniques have also been leveraged, not necessarily to achieve sublinear costs, but to make the linear cost per query significantly more efficient~\cite{simplepir23,frodopir23}. 
A prominent example is SimplePIR~\cite{simplepir23}, which relies on Regev's LWE-based encryption scheme~\cite{regev2009lattices}.
The key insight of SimplePIR is to exploit the linear structure of Regev encryption to precompute the vast majority of the expensive matrix operations associated with the database during the offline phase.
As a result, the online query phase only requires lightweight computations that depend on the encrypted query, while the bulk of the matrix-vector multiplication is carried out in advance and depends exclusively on the database.
This property makes SimplePIR particularly well suited to our setting, in which the provider is informed in advance of an upcoming audit and jointly agrees with the auditor on the audit set.
Once this agreement is established, the provider can label the audit set accordingly and transmit a corresponding digest to the auditor (see details in Section~\ref{sec:o_audit}).
Furthermore, De Castro and Lee~\cite{verisimplepir24} introduced VeriSimplePIR, an extension of SimplePIR that ensures verifiability.
More precisely, it allows the client (\emph{i.e.}, the auditor in our case) to verify that the database queried is exactly the one used to generate the digest sent by the server (\emph{i.e.}, the provider in our setting) without disclosing the content of the database. 
This verifiability is achieved at no additional online cost and can be used to prevent the provider from changing the labels during the audit.

\section{The Black-Box Auditing Game}
\label{sec:auditing_game}

In this section, we first describe the typical black-box fairness auditing setup considered in this paper. 
Then, we theoretically quantify its vulnerability to manipulation.

\subsection{Black-Box Fairness Auditing}%
\label{subsec:black-box_audit}

Let $\mathcal{X}$ be the input space, and let $h:\mathcal{X}\to\{0,1\}$ denote the provider's (binary) prediction function.
Each query $x\in\mathcal{X}$ is associated with a binary \emph{protected} attribute $A(x)\in\{0,1\}$, which partitions the population into two \emph{protected groups} (those with $A=1$ and those with $A=0$).

The \emph{demographic parity} gap of $h$ quantifies how differently $h$ assigns positive outcomes across these two groups.
More precisely, it is defined as the difference between the probability of predicting the positive class for individuals in group $A=1$ and the corresponding probability for individuals in group $A=0$:
\[
\begin{aligned}
\metric_{\mathcal{D}}(h)
&:= \P[X\sim\mathcal{D}]{h(X)=1 \given A(X)=1} \\
&\quad - \P[X\sim\mathcal{D}]{h(X)=1 \given A(X)=0},
\end{aligned}
\]
in which $\mathcal{D}$ denotes the underlying data distribution. 
In practice, since $\mathcal{D}$ is typically unknown, this quantity is estimated on a finite sample set $S\subset\mathcal{X}$. 
For $a\in\{0,1\}$, let $n_a = \bigl|\{x\in S: A(x)=a\}\bigr|$ 
and define the empirical positive prediction rate over protected group $a$ as
\[
p_{a,S}(h)
\;:=\;
\frac{1}{n_a}\sum_{x\in S_a} h(x).
\]
The demographic parity gap of $h$ on $S$ is then
\[
d_S(h) \;:=\; p_{1,S}(h) - p_{0,S}(h).
\]

When the classifier \(h\) is clear from the context, we omit it from the notation and simply write \(p_{a,S}\) and \(d_S\).
Given a tolerance \(\varepsilon \ge 0\), we say that \(h\) satisfies the demographic parity constraint on \(S\) if
\begin{align}
    |d_S|\le \varepsilon, \label{eq:constr_fairness}
\end{align}
in which \(\varepsilon\) is a chosen tolerance threshold.
The objective of the auditor is to verify whether Constraint~\eqref{eq:constr_fairness} holds for a given \emph{audit set} \(S\).
More precisely, if \(|d_S|\le \varepsilon\), the ML provider passes the test while otherwise, if \(|d_S| > \varepsilon\), the provider is flagged as unfair by the auditor.
However, a key challenge is that the auditor often does not have access to the model \(h\) itself, nor to its training data, code or documentation.
To estimate the demographic parity gap on an audit set \(S\), the auditor must therefore \emph{resort to black-box access} and query \(h\) on a set of inputs \(S \subset \mathcal{X}\) that they collect. 
The resulting predictions are then used to compute \(d_S\). 
This black-box audit procedure is described in \Cref{algo:bbox_audit}.
In the next subsection, we show how this framework is vulnerable to provider manipulation.

\subsection{Manipulating Black-box Audit}

One important limitation of typical demographic parity audits conducted as described in Section~\ref{subsec:black-box_audit} is that the auditor only has access to query answers (through the provider's audit API) and no knowledge of the underlying model.
It is therefore possible for a malicious provider to adversarially manipulate the audit by returning a label \(h(x)\) that differs from the underlying model’s true output.
To quantify how easy such manipulation is, Proposition~\ref{thm:n_manipulation_vanilla_audit} characterizes the minimum number of predictions that the model provider must flip in order to fall below the detection threshold.

\begin{proposition}[Black-box audit manipulation]\label{thm:n_manipulation_vanilla_audit}
Let $S \subset \mathcal{X}$ be the audit set (with $\lvert S \rvert = n$), let \(d_{\text{true}}\) denote the demographic parity gap of the provider's model on \(S\) (in the absence of manipulation), let \(n_{\min} = \min(n_0,n_1)\) be the size of the smallest protected group in \(S\), and let \(\mtol\) be the audit threshold.
Assuming the model would fail the audit, \emph{i.e.}, \(|d_{\text{true}}| > \mtol\), the minimum number \(m_{\text{vanilla}}\) of predictions that the provider must flip in order to pass the audit satisfies
\begin{equation}
    m_{\text{vanilla}} \;\ge\; \lceil \bigl(|d_{\text{true}}| - \mtol\bigr)\, n_{\min} \rceil.
\end{equation}
\begin{proof}
Assume without loss of generality that $d_{\text{true}}>0$. 
Flipping a prediction $h(x)$ from $1$ to $0$ for an example with $A(x)=1$ decreases $p_{1,S}$, and hence the gap on $S$, by at most $1/n_1$. 
Similarly, flipping a prediction from $0$ to $1$ for an example with $A(x)=0$ increases $p_{0,S}$ and decreases the gap by at most $1/n_0$. 
Since a single flip affects only one group rate, the gap can decrease by at most
\[
\max\!\left(\frac{1}{n_0},\frac{1}{n_1}\right)
=
\frac{1}{n_{\min}}
\]
per modification. 
Therefore, reducing the gap from $d_{\text{true}}$ to at most $\mtol$ requires at least $(d_{\text{true}}-\mtol)\,n_{\min}$ modifications, hence at least $\lceil(d_{\text{true}}-\mtol)\,n_{\min}\rceil$ since $m_{\text{vanilla}}$ is an integer.
\end{proof}
\end{proposition}

\begin{figure*}[t!]
    \centering
    \begin{minipage}[t]{0.39\linewidth}
        \vspace{0pt} %
        \begin{algorithm}[H]
            \caption{Black-Box audit procedure\vphantom{\acs{respir}}}
            \label{algo:bbox_audit}
            \footnotesize
            \begin{algorithmic}[1]
                \Require $\inspace$, $\model$
                \State Auditor gathers or generate an audit dataset $S \subset \inspace$ \label{algo:bbox_audit:audit_set}
                \State For each example $x \in S$, the auditor queries the provider for $h(x)$
                \State The auditor computes the parity gap $d_S$
                \If{$d_S < \mtol$}
                    \State \Return \texttt{Pass} 
                \Else~
                    \State \Return \texttt{Fail}
                \EndIf
            \end{algorithmic}
        \end{algorithm}
    \end{minipage}
    \hfill%
    \begin{minipage}[t]{0.59\linewidth}
        \vspace{0pt} %
        \begin{algorithm}[H]
            \caption{The \acs{respir} audit procedure}
            \label{algo:respir}
            \footnotesize
            \begin{algorithmic}[1]
                \Require $\inspace$, $\model$
                \State Provider and auditor agree on a large candidate set $C \subset \inspace$ \label{algo:respir:candidates}
                \State Provider privately labels all examples in $C$, creating $D = \set{(x, h(x)) : x \in C}$ \label{algo:respir:labeling}
                \State Auditor interacts with $D$ to retrieve outputs on an audit set $S \subset D$ \label{algo:respir:PIR_interaction}
                \Statex using the \ac{PIR} primitive
                \If{$d_S < \mtol$} \label{algo:respir:decision_1}
                    \State \Return \texttt{Pass} \label{algo:respir:decision_2}
                \Else~
                    \State \Return \texttt{Fail}\label{algo:respir:decision_3}
                \EndIf
            \end{algorithmic}
        \end{algorithm}
    \end{minipage}%
\end{figure*}

Proposition~\ref{thm:n_manipulation_vanilla_audit} provides a lower bound on the number %
of output modifications that a provider must perform in order to pass the audit.
This bound is linear in the amount of unfairness to be hidden but also in the size of the smallest protected group in the audit set, which means that unbalanced audit sets therefore greatly facilitate manipulation.
However, even for a balanced audit set, manipulating the audited fairness remains easy.
For instance, to hide a disparity of \(0.01\) (\emph{i.e.}, \(d_{\text{true}}-\mtol=0.01\)) with a balanced audit set of \(n=400\) queries, the provider only has to modify two predictions.

The manipulation model considered in this work therefore consists of arbitrary output flipping. By allowing any (worst-case) modification to the prediction vector returned by the platform, this model generalizes more constrained threat models and yields model-agnostic guarantees.

Assume that the auditor possesses a small \emph{verification set} of \(k\) queries (also called \emph{canaries}).
These queries are externally labeled (\emph{e.g.}, via crowdsourcing or by directly querying the provider's public API), and manipulation is detected as soon as at least one verification query is modified by the provider.
Since the auditor controls the audit set \(S\), we assume that the canaries are included in \(S\) by construction.
Therefore, the canary effectiveness can be modeled through a single \emph{catch-all} parameter \(q \in [0,1]\), interpreted as the probability that a canary included in the audit set remains indistinguishable to the provider (so that the provider cannot selectively avoid modifying it).
Let \(n \;:=\; \lvert S \rvert\) denote the size of the audit set, and let \(m\) denote the number of audit set queries whose outputs are adversarially modified by the provider.
Then, conditional on being effectively present, a canary is modified with probability \(m/n\), and the probability that manipulation is detected satisfies
\begin{equation}
\mathbb{P}(\text{manipulation detected})
\;=\;
1 - \bigl(1 - q\, m/n\bigr)^k. \label{eq:manipulation_detection_proba}
\end{equation}
Overall, this highlights that typical black-box fairness audits can be highly brittle, as a small number of targeted output modifications may suffice to hide unfairness.
Moreover, our detection analysis shows that the probability of catching such manipulation depends critically on both the number of adversarial manipulations $m$ and the size \(k\) of the verification set: small \(k\) provides limited guarantees when only a few outputs are modified, whereas detection quickly becomes likely as either \(m\) or \(k\) increases. 

While increasing the number of canaries \(k\) is a simple countermeasure, it is often impractical or too costly in practice.
Instead, in the next section, we propose an oblivious auditing protocol that prevents the provider from tailoring its modifications to the specific audit set.
As a result, to hide unfairness with high probability, the provider is forced to perform a larger number \(m\) of output modifications, which increases the cost of manipulation and its detectability.

\section{Oblivious Auditing}\label{sec:oblivious_auditing}
\label{sec:o_audit}

\subsection{\acs{respir} Protocol}
Based on the observation that hiding unfairness over a known finite set is easily achieved, we propose a new audit protocol coined \ac{respir}.
To mitigate audit manipulations, \ac{respir} forces the provider to find a strategy to optimize its manipulations on a significantly larger \emph{candidate set} $C$ rather than on the particular audit set $S$ drawn by the auditor. 
The cryptographic primitive that prevents the provider from knowing which sample is used in the audit is the \ac{PIR} (\emph{cf.} Section~\ref{sec:PIR}).
The audit protocol consists of three steps, described in \Cref{algo:respir}:
\begin{itemize}
\item{Step 1 (\Cref{algo:respir}, \cref{algo:respir:candidates}).}
The provider and the auditor agree on a (large) \emph{candidate set} $\C$. 
In practice the candidate set can be proposed by the auditor (\emph{e.g.}, derived from a large public dataset) or by the provider (\emph{e.g.}, the company is required to share internal information to prepare for the audit).
The selection process for $\C$ can even be iterative: after one party proposes a first candidate set, the other can ask for additions or deletions to reach a more representative distribution.

\item{Step 2 (\Cref{algo:respir}, \cref{algo:respir:labeling,algo:respir:PIR_interaction})}
The auditor samples an \emph{audit set} $\S \subset \C$ among the candidate set and runs it through the PIR mechanism.
The \ac{PIR} mechanism guarantees that 1) the provider cannot infer the exact audit queries used in $S$ and 2) the auditor cannot use the interaction to gather more information about the output of $h$ other than what is given by $h(\S)$.
Thus, the \ac{PIR} properties effectively exclude any manipulation during the interaction between the auditor and the provider at step 2.

\item{Step 3 (\Cref{algo:respir}, \cref{algo:respir:decision_1,algo:respir:decision_2,algo:respir:decision_3})}
Using the answers of the provider $h(\S)$ gathered during the \ac{PIR} phase, the auditor produces an estimate $d_S$ of the audit metric and decides whether the provider passes the audit or not.
The goal of a malicious provider is to make the auditor accept its model (\emph{i.e.}, $d_S \leq \mtol$) when it should not have passed (\emph{i.e.}, $d_{\text{true}} > \mtol$).
\end{itemize}

We further highlight the following design properties.
First, the candidate set \( C \) is constructed to be representative of the (unknown) data distribution \( \mathcal{D} \).
Consequently, when the audit set \( S \subseteq C \) is sampled uniformly at random and is sufficiently large, it is expected to closely reflect \( \mathcal{D} \).
\revision{While $S$ could be sampled from $C$ in other ways, uniform sampling is used because it (i) yields representativeness via standard concentration arguments and is easier for the provider to accept, whereas tailored sampling can appear arbitrary to target specific fairness values, (ii) requires no prediction-dependent design since the auditor sees features in $C$ but not the model’s outputs, and (iii) enables a clean probabilistic derivation of manipulation lower bounds (Theorem~\ref{thm:n_manipulation_respir_audit}).}
Moreover, planting a small number \(k\) of verification queries (\emph{canaries}) into \(S\) does not materially affect this representativeness when \(k \ll |S|\).
As a result, it is reasonable to require the provider to pass the fairness verification (\emph{i.e.,} to exhibit unfairness of at most \( \varepsilon \)) on \( S \).

Second, although the provider is required to label a potentially large candidate set \( C \), the amount of information revealed to the auditor is the same as in the traditional black-box auditing setup described in Section~\ref{subsec:black-box_audit}, since the sampled audit set \( S \) need not be larger.
Because the provider’s underlying model may be proprietary or constitute a business secret, this property is important in practice.

Third, the use of a \ac{PIR} scheme prevents the provider from tailoring its responses to a known audit set.
Instead, to pass the audit with high probability, the provider must ensure that the fairness constraint holds for a random audit set drawn from the candidate set, implying that fairness must generalize across the candidate set rather than being satisfied only on a specific subset. 
In \Cref{sec:manip}, we provide a probabilistic analysis showing that this substantially increases both the cost and detectability of manipulation.

\subsection{VeriSimplePIR Applied to an Audit Set}
Let $S$ denote the audit set and $h$ the provider’s prediction function. 
The primary advantage of using a PIR protocol for auditing, as opposed to sending an encrypted version of the audit set $S$, lies in its efficiency as PIR is considerably faster than evaluating the model on $S$ via homomorphic encryption. 
For example, if $h$ is a neural network, running $x \in S$ through $h$ with homomorphic encryption would require as many matrix-vector multiplications as there are network layers. 
In contrast, the PIR protocol reduces this to a single, lightweight matrix-vector multiplication per $x \in S$. 

While a detailed description of the interaction during the PIR between the auditor and the provider is provided in \Cref{fig:verisimplepir} in Appendix~\ref{appendix:pir_scheme}, we summarize here the main steps of the protocol at a high level.
VeriSimplePIR proceeds in two phases: an offline phase for setup and commitment and an online phase for querying. 
The process starts when the auditor requests an audit from the provider. After both parties agree on the audit set $S$, the provider applies its prediction function $h$ to label the samples, resulting in a database $h(S)$. 
These labels are then packed into a small matrix $\D \in \Z_p^{\ell \times \mu}$, with the parameters $\ell$, $p$, and $\mu$ specified by the auditor to satisfy security requirements. 
Additionally, the provider provides an indexing table to assist the auditor in mapping labels to the corresponding samples in $\D$.

The protocol then proceeds through four main functions: \textbf{Commitment}, \textbf{Query}, \textbf{Answer} and \textbf{Recovery}. 
During the offline phase, the provider performs the \textbf{Commitment} function by generating a cryptographic digest of the matrix $\D$ and a ``hint'' sent to the auditor to verify that $\D$ is well formed. 
The auditor cannot learn the content of the database from the digest or the hint. 
Optionally, the provider can also precompute a proof $\pi$ that will later be used to verify that queries are performed on the committed database (see Construction 5.1 in~\cite{verisimplepir24} for more details about this proof).

In the online phase, the auditor executes the \textbf{Query} function by selecting an index $i=(i_r,i_c)$ and sending an encryption of the one-hot encoding of $i_c$ to the provider using the Regev's LWE encryption scheme~\cite{regev2009lattices}. 
The provider then performs the \textbf{Answer} function by computing a matrix-vector multiplication between the $\D$ and the encrypted query. 
Finally, the auditor executes the \textbf{Recovery} function to decrypt and extract the queried value from the response. 
The auditor can perform multiple queries and verify them all at once using the precomputed proof $\pi$ (see~\cite{verisimplepir24} for more details about this proof).
To demonstrate how the protocol performs in practice, \Cref{tab:performances} reports timing measurements for the online phase (query, answer and recovery steps) along the bandwidth usage, across various sizes of $h(S)$, in which $h$ is a binary classifier. The given results are an average of 10 runs.
When $\forall y \in h(S), \log_2(y) < \log_2(p)$, multiple labels $y$ can be encoded within a single element in $\Z_p$, so each entry in $\D$ can store several labels. 
As a result, the matrix $\D$ can be considerably smaller than the raw set of labels $h(S)$. 
The results reported shows that the PIR protocol maintains practical efficiency even when handling large audit sets.

\begin{table}[t!]
  \centering
  \caption{Performance of VeriSimplePIR~\cite{verisimplepir24} on the online phase for different database sizes.}
  \label{tab:performances}
  \resizebox{0.98\linewidth}{!}{
  \begin{tabular}{lccccc}
  \toprule
  & \multicolumn{5}{c}{Database size} \\
  \cmidrule(lr){2-6}
  & 128 KiB & 128 MiB & 1 GiB & 4 GiB & 8 GiB  \\
  \midrule
  Query size    & 0.375 KiB    & 13.16 KiB   & 37.76 KiB   & 77.85 KiB   & 110.1 KiB   \\
  Answer size   & 7.812 KiB    & 276.6 KiB   & 816.8 KiB   & 1.644 MiB   & 2.325 MiB   \\
  \midrule
  Bandwidth usage & 8.187 KiB & 289.8 KiB & 854.6 KiB & 1.722 MiB & 2.435 MiB \\
  \midrule
  Query time    & 0.5024 ms  & 1.9587 ms   & 5.3174 ms   & 10.5765 ms  & 15.3573 ms  \\
  Answer time   & 0.0221 ms  & 12.9588 ms  & 69.1425 ms  & 1447.86 ms  & 2816.19 ms  \\
  Recovery time & 0.574 ms   & 13.247 ms   & 42.825 ms   & 169.721 ms  & 807.623 ms  \\
  \midrule
  Total online time  & 1.09 ms  & 28.16 ms  & 117.28 ms  & 1628.15 ms & 3639.17 ms \\
  \bottomrule
  \end{tabular}
  }
\end{table}

\subsection{Manipulation Difficulty Analysis}
\label{sec:manip}

We now establish probabilistic guarantees that hiding the audit set through \ac{respir} increases the difficulty of targeted manipulations. 
The results formalize a core intuition: if the provider cannot identify the audited queries, then passing the audit with high probability requires improving behavior on a nontrivial fraction of the entire candidate set, rather than selectively patching a known evaluation subset.

\begin{theorem}[\ac{respir} audit manipulation]\label{thm:n_manipulation_respir_audit}
Let $C \subset \mathcal{X}$ be the candidate set (with $\lvert C \rvert = N$) and let $S \subset C$ be the audit set (with $\lvert S \rvert = n$), sampled uniformly at random without replacement from $C$.
Let \(d_{C,\text{true}}\) denote the demographic parity gap of the provider's model on \(C\) (in the absence of manipulation), let \(n_{\min} = \min(n_0,n_1)\) be the size of the smallest protected group in \(S\), let \(N_{\min} = \min(N_0,N_1)\) be the size of the smallest protected group in \(C\), and let \(\mtol\) be the audit threshold.
For $\delta \in (0, \frac{1}{2}]$, assuming the model would fail the audit, \emph{i.e.}, \(|d_{C,\text{true}}| > \mtol\), it suffices for the provider to flip at least
\begin{equation}
m_\text{\ac{respir}} \;\geq\; \left\lceil \left(
   \lvert d_{C,\text{true}} \rvert - \mtol 
    + \sqrt{\frac{2\ln(4/\delta)}{n_{\min}}}
\right) N_{\min} \right\rceil \label{eq:n_manipulation_respir_audit}
\end{equation}
predictions in $C$ in order to pass the audit with probability at least $1 - \delta$.
\end{theorem}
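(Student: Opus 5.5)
Assume without loss of generality that $d_{C,\text{true}}>0$. The plan is constructive: exhibit a flipping strategy on $C$ with the stated number of flips, then show it passes a uniformly random audit set $S$ with probability at least $1-\delta$. The statement is phrased as ``it suffices to flip'' this many, so the strategy only needs to be shown sufficient.

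\emph{Strategy.} Set the target gap on $C$ to $\mtol - \sqrt{2\ln(4/\delta)/n_{\min}}$, and let $g := |d_{C,\text{true}}| - \mtol + \sqrt{2\ln(4/\delta)/n_{\min}}$ be the amount by which the gap must drop. Take the smaller protected group in $C$, of size $N_{\min}$, and flip predictions only in that group, in the direction that closes the gap. If $N_0\le N_1$, flip $0\to1$ in group $0$; otherwise flip $1\to0$ in group $1$.

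\emph{Step 1: effect on the candidate gap.} Each such flip changes the gap on $C$ by exactly $1/N_{\min}$. Hence $\lceil g N_{\min}\rceil$ flips bring the manipulated gap $d_C'$ to at most $\mtol - \sqrt{2\ln(4/\delta)/n_{\min}}$. This mirrors the counting in Proposition~\ref{thm:n_manipulation_vanilla_audit}, run in reverse. I will note, and assume, that enough flippable points exist in that group; if the required rate exceeds 1 one must spread flips across both groups, and I will remark on this.

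\emph{Step 2: concentration.} Condition on the group counts $n_0,n_1$ of $S$. Given these counts, $S_a$ is a uniform sample without replacement of size $n_a$ from $C_a$. Hoeffding's inequality holds for sampling without replacement (Hoeffding 1963, Thm.~4). It gives
\[
\Pr\bigl(|p_{a,S}-p_{a,C}|>t_a\bigr)\le 2e^{-2n_a t_a^2}.
\]
Choose $t_a=\sqrt{\ln(4/\delta)/(2n_a)}$, so each group fails with probability at most $\delta/2$. A union bound over $a\in\{0,1\}$ then gives, with probability at least $1-\delta$,
\[
|d_S'-d_C'| \le t_0+t_1 \le 2\sqrt{\frac{\ln(4/\delta)}{2n_{\min}}}=\sqrt{\frac{2\ln(4/\delta)}{n_{\min}}}.
\]
Combining with Step 1 yields $d_S'\le\mtol$.

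\emph{Step 3: the two-sided condition.} Passing also requires $d_S'\ge-\mtol$. I will handle this by stopping the flips so that $d_C'$ lies in the interval $[-\mtol+\sqrt{\cdot},\,\mtol-\sqrt{\cdot}]$. This is possible when $\mtol$ exceeds the deviation term. If it is not, I will argue using the algorithm's one-sided test $d_S<\mtol$, as written in \Cref{algo:respir}.

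\emph{Expected obstacle.} The main difficulty is that $n_{\min}$ is itself random, since it is determined by $S$. I will treat it as fixed by conditioning on the counts $(n_0,n_1)$, which is valid because the bound holds for each realization. The restriction $\delta\le 1/2$ is only needed to keep the constants tidy, for instance so that $\ln(4/\delta)\ge\ln 8$.
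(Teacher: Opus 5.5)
Your Step~2 is essentially the paper's Theorem~\ref{cor:manip_sufficient_condition}, and it is sound. The paper splits the deviation as $\gamma/2$ per group and bounds $2e^{-n_1\gamma^2/2}+2e^{-n_0\gamma^2/2}\le 4e^{-n_{\min}\gamma^2/2}$; your group-specific $t_a$ reach the same $\gamma=\sqrt{2\ln(4/\delta)/n_{\min}}$.

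Your counting step, however, runs in the opposite direction from the paper's.
\begin{itemize}
\item You exhibit one strategy: flips in the smaller group, each moving $d_C$ by exactly $1/N_{\min}$. You show that $\lceil gN_{\min}\rceil$ of them reach $d_C\le\mtol-\gamma$. This is an achievability (upper) bound.
\item The paper argues that \emph{every} flip moves $d_C$ by \emph{at most} $\max(1/N_0,1/N_1)=1/N_{\min}$. Hence \emph{any} set of flips enforcing $|d_C|\le\mtol-\gamma$ has size at least $\lceil gN_{\min}\rceil$.
\end{itemize}
Despite the word ``suffices'', this lower bound is what the theorem is meant to deliver. It supports the claim that hiding $S$ forces many more modifications. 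It is the $m$ plugged into~\eqref{eq:manipulation_detection_proba} to lower-bound detection. The appendix analogues also state it explicitly as ``must contain at least''.

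Your construction cannot yield this bound: one working strategy with $m$ flips says nothing about whether fewer flips could also enforce the condition. The missing step is a one-line observation: flips in the larger group move the gap by only $1/\max(N_0,N_1)$, and wrong-direction flips increase it. So the total reduction is at most (number of flips)$/N_{\min}$.

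Conversely, you supply something the paper never shows. When the smaller group has enough flippable predictions, the bound is attained. So $\lceil gN_{\min}\rceil$ is exactly the minimum cost of enforcing the sufficient condition. It is not the minimum cost of passing per se, since the Hoeffding margin is conservative.

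The difficulties in your Step~3 come from the achievability direction.
\begin{itemize}
\item A lower bound needs only the upper side $d_C\le\mtol-\gamma$. Your construction also needs $d_C\ge-(\mtol-\gamma)$. Steps of size $1/N_{\min}$ can jump past this when the target interval $[-(\mtol-\gamma),\mtol-\gamma]$ is narrower than $1/N_{\min}$.
\item When $\mtol<\gamma$, the paper simply concedes in Section~\ref{sec:expes} that the sufficient target is unattainable. Falling back on the one-sided test of \Cref{algo:respir} changes the pass criterion $|d_S|\le\mtol$ that Theorem~\ref{cor:manip_sufficient_condition} certifies.
\item Your strategy is calibrated to the realized $n_{\min}$, which the provider does not know when committing its labels. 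Conditioning on $(n_0,n_1)$ makes the probability statement valid. But the strategy is only implementable against a known lower bound on $n_{\min}$, which works because $\gamma$ decreases in $n_{\min}$.
\end{itemize}
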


\begin{figure*}[tb!]
    \centering  
     \begin{subfigure}[t]{\linewidth}
     \centering
         \includegraphics[width=0.4\linewidth]{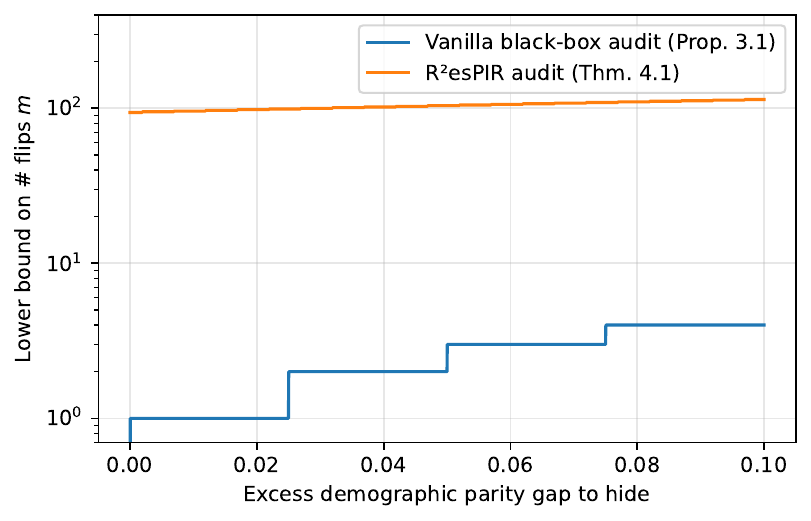}
         \includegraphics[width=0.4\linewidth]{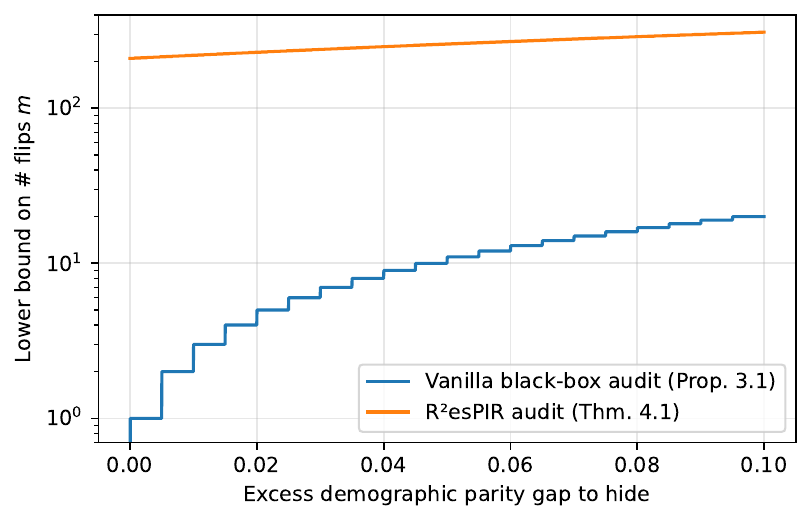}
         \caption{Lower bounds on the number of output flips $m$ required to hide unfairness, for the unbalanced setting (left) and the balanced setting (right).}
         \label{subfig:manipulation_cost_illustration}
     \end{subfigure}

     \begin{subfigure}[t]{\linewidth}
     \centering
         \includegraphics[width=0.4\linewidth]{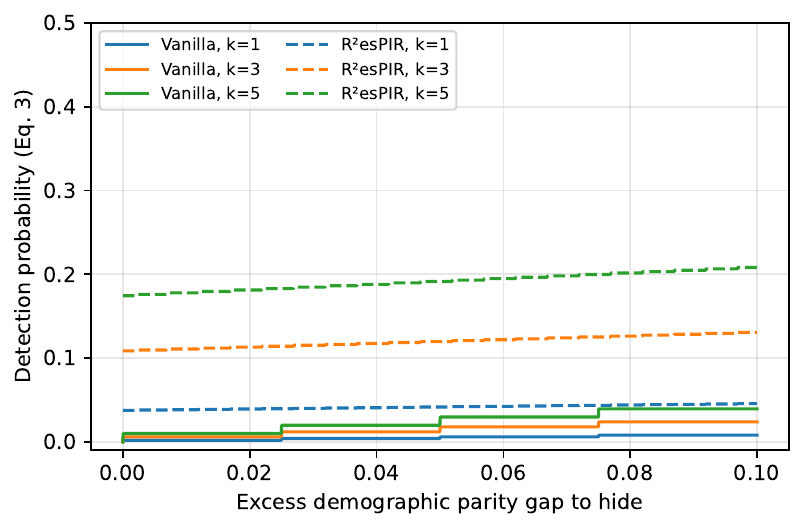}
         \includegraphics[width=0.4\linewidth]{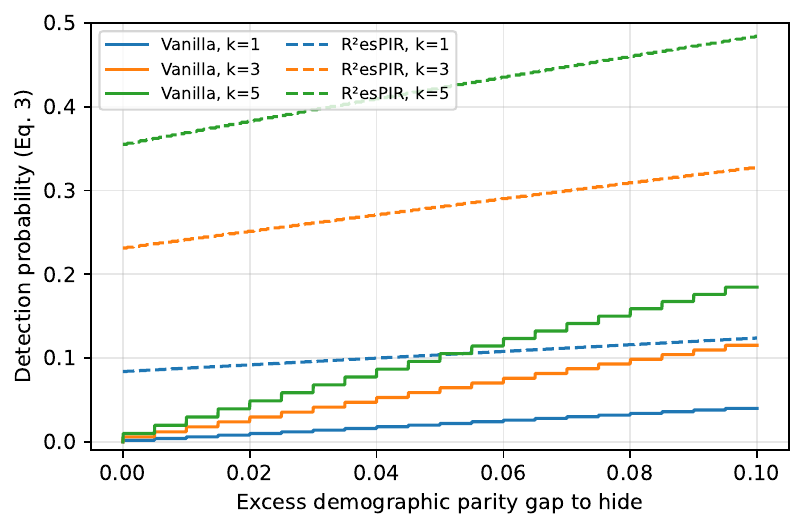}
         \caption{Probability that the auditor detects manipulation, for the unbalanced setting (left) and the balanced setting (right).}
         \label{subfig:detection_probability_illustration}
     \end{subfigure}

    \caption{Illustration of our theoretical bounds on manipulation difficulty for a vanilla black-box audit and for \ac{respir}. 
    We consider audit sets of size $|S|=400$ and $k\in\{1,3,5\}$ canaries, each detected with effectiveness probability $q=0.8$. 
    For \ac{respir}, we use a candidate set of size $|C|=2{,}000$ and target passing probability $1-\delta=0.95$. 
    We compare two protected group-composition regimes: an unbalanced regime where the minority group represents $10\%$ of $S$ (and of $C$ for \ac{respir}) and a balanced regime with equal group sizes. 
    The horizontal axis varies the amount of unfairness above the tolerance $\varepsilon$.}
    \label{fig:bounds_illustration}
\end{figure*}

\begin{proof}
Let $\gamma := \sqrt{\frac{2\ln(4/\delta)}{n_{\min}}}$.
By Theorem~\ref{cor:manip_sufficient_condition} (provided hereafter in Section~\ref{subsubsec:intermediate_th}), a sufficient condition for passing the audit with probability at least $1-\delta$ (conditional on $n_0,n_1$) is that the (post-manipulation) candidate-set gap satisfies
\[
|d_C|\le \mtol-\gamma.
\]
Thus, it suffices for the provider to modify predictions on $C$ so as to reduce the initial gap $|d_{C,\text{true}}|$ down to at most $\mtol-\gamma$.

We now bound the number of prediction flips required to achieve such a reduction on $C$.
Assume without loss of generality that $d_{C,\text{true}}>0$ (otherwise apply the same argument to $-d_{C,\text{true}}$).
Consider a single flip of one prediction on $C$.
If the flipped example belongs to group $A=1$, then $p_{1,C}$ changes by at most $1/N_1$, hence $d_C$ changes by at most $1/N_1$.
If it belongs to group $A=0$, then $p_{0,C}$ changes by at most $1/N_0$, hence $d_C$ changes by at most $1/N_0$.
Therefore, a single flip can change $d_C$ by at most
\[
\max\!\left(\frac{1}{N_0},\frac{1}{N_1}\right)
=
\frac{1}{N_{\min}}.
\]
To reach $d_C\le \mtol-\gamma$, the provider must decrease the gap by at least
\[
d_{C,\text{true}}-(\mtol-\gamma)
=
|d_{C,\text{true}}|-\mtol+\gamma.
\]
Since each flip can reduce $d_C$ by at most $1/N_{\min}$, it suffices to flip at least
\[
\left\lceil \left(|d_{C,\text{true}}|-\mtol+\gamma\right)N_{\min}\right\rceil
\]
predictions in $C$, as claimed.
\end{proof}

The manipulation resistance of \ac{respir} (\Cref{thm:n_manipulation_respir_audit}) improves on that of a ``vanilla'' black-box audit (\Cref{thm:n_manipulation_vanilla_audit}) in two complementary ways.
First, \ac{respir} enforces a stricter effective target on the provider’s underlying disparity because of the sampling uncertainty term $\sqrt{2\ln(4/\delta)/n_{\min}}$. 
This term materializes the fact that under uniform random sampling from $C$, audits sets $S$ might exhibit unfairness $d_S$ slightly lower or higher due to the sampling process. 
Nevertheless, as long as they are representative of the underlying distribution (which is the case unless $n = \lvert S \rvert$ is pathologically small), the provider's model is expected to behave fairly on them as well. 
In other words, to ensure unfairness does not exceed $\varepsilon$ on the sampled $S$ with high probability, the provider must over-correct unfairness over the candidate set.
Second, \ac{respir} increases the scale at which modifications must be performed. 
In a vanilla black-box audit, the provider may concentrate changes on the audited set $S$, so the best-case manipulation complexity scales with $n_{\min}$.
Under \ac{respir}, the provider must instead improve fairness across the entire candidate set $C$ to pass for an unknown random $S$, implying a manipulation complexity that scales with $N_{\min}$, the size of the smaller protected group in $C$.
Since typically $N_{\min}\gg n_{\min}$, this yields a substantially larger manipulation cost.

Taken together, these two effects force the provider to modify a substantially larger number of outputs to pass reliably, which in turn increases the probability that manipulation is detected. 
Indeed, given a verification set of $k$ canaries (drawn from $C$ and externally labeled), the probability of detection increases with the manipulated fraction $m/N$ (according to~\eqref{eq:manipulation_detection_proba}, in which we substitute $n$ by $N$ to model the probability that a canary's output is modified). 
Moreover, when $k$ is small compared to the audit size $n$, the auditor can include these canaries alongside the sampled audit set $S$ without materially affecting the fairness estimation guarantees derived above.

\paragraph{Illustrative Example}
The manipulation difficulty of both the vanilla black-box audit and our proposed \ac{respir} audit is illustrated in \Cref{fig:bounds_illustration} for a representative parameter setting. 
In particular, \Cref{subfig:manipulation_cost_illustration} reports the number of output flips the provider must perform as a function of the amount of unfairness to hide, while \Cref{subfig:detection_probability_illustration} shows the corresponding probability that the auditor detects such manipulation for different numbers of canaries included in the audit set~$S$. 
As expected, protected group imbalance (left panels) substantially facilitates audit manipulation compared to the balanced case (right panels), as fewer output flips are needed to drive the observed disparity below the tolerance~$\mtol$. 
In both regimes, \ac{respir} notably increases the required number of flips~$m$ and, consequently, the probability of manipulation detection.

\subsubsection{Intermediate Concentration Result}\label{subsubsec:intermediate_th}

Theorem~\ref{thm:n_manipulation_respir_audit} follows from Theorem~\ref{cor:manip_sufficient_condition}, which formalizes the fact that the audit only observes the disparity through the sample-based estimate $d_S$. When $S$ is sampled uniformly at random from $C$, $d_S$ concentrates around $d_C$, with deviations controlled at scale $\sqrt{\ln(1/\delta)/n_{\min}}$. 
Consequently, to pass with high probability, the provider must keep its candidate-set gap at least this far \emph{below} the fairness threshold.

\begin{theorem}[\ac{respir} audit: Sufficient candidate set condition for high-probability passing]\label{cor:manip_sufficient_condition}
Let $n_{\min} := \min(n_0,n_1)$ and fix $\delta\in(0,1)$. 
If the provider's candidate set gap satisfies
\[
|d_C|
\;\le\;
\mtol 
\;-\;
\sqrt{\frac{2\ln(4/\delta)}{n_{\min}}},
\]
then the provider passes the audit with probability at least $1-\delta$:
\[
\Pr(|d_S|\le \mtol \mid n_0,n_1)\;\ge\;1-\delta.
\]
\end{theorem}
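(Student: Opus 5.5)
Conditional on the group counts $n_0,n_1$, uniform sampling of $S$ from $C$ without replacement means that $S_1$ is a uniform random subset of size $n_1$ of $C_1$, that $S_0$ is a uniform random subset of size $n_0$ of $C_0$, and that the two are independent. I will therefore handle each group rate separately and combine them with a union bound.

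\textbf{Step 1 (per-group concentration).} Fix $a\in\{0,1\}$. The quantity $p_{a,S}$ is the mean of $n_a$ draws without replacement from the $\{0,1\}$-valued population $\{h(x):x\in C_a\}$, whose mean is $p_{a,C}$. Hoeffding's inequality holds for sampling without replacement, since it is dominated by the with-replacement case (Hoeffding 1963, Thm.~4; Serfling's refinement only sharpens it). It gives, for any $t>0$,
\[
\Pr\bigl(|p_{a,S}-p_{a,C}|>t \mid n_0,n_1\bigr)\le 2\exp(-2n_a t^2).
\]

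\textbf{Step 2 (choice of $t$ and union bound).} I set $t:=\gamma/2$ with $\gamma=\sqrt{2\ln(4/\delta)/n_{\min}}$. Then $2n_a t^2 = n_a\gamma^2/2 = (n_a/n_{\min})\ln(4/\delta)\ge \ln(4/\delta)$, so each group's failure probability is at most $2\cdot\delta/4=\delta/2$. A union bound over $a\in\{0,1\}$ shows that, with probability at least $1-\delta$, both $|p_{1,S}-p_{1,C}|\le\gamma/2$ and $|p_{0,S}-p_{0,C}|\le\gamma/2$ hold.

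\textbf{Step 3 (conclusion).} On this event, the triangle inequality gives $|d_S-d_C|\le |p_{1,S}-p_{1,C}|+|p_{0,S}-p_{0,C}|\le\gamma$. Hence $|d_S|\le |d_C|+\gamma\le\mtol$ by the hypothesis, and the provider passes.

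The main obstacle is Step 1. One must justify conditioning on $(n_0,n_1)$: this requires showing that the conditional law of $S$ is uniform over subsets with the prescribed group counts, which follows from exchangeability. One must also justify applying Hoeffding to sampling without replacement. The remaining steps are constant bookkeeping, and the $4/\delta$ inside the logarithm matches the split of $\delta/2$ per group combined with the two-sided bound.
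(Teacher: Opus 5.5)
Your proof is correct and follows essentially the same route as the paper. Conditional on $(n_0,n_1)$, you apply Hoeffding's inequality for sampling without replacement to each group rate at deviation $\gamma/2$, combine the two groups with a union bound (each contributing at most $\delta/2$, which is the paper's $4\exp(-\gamma^2 n_{\min}/2)\le\delta$), and conclude via $|d_S|\le|d_C|+|d_S-d_C|$; your explicit check that, given the group counts, $S_0$ and $S_1$ are independent uniform subsets of $C_0$ and $C_1$ fills in a step the paper only asserts.
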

\begin{proof}
Conditioning on \((n_0,n_1)\), we apply finite-population Hoeffding/Serfling
bounds~\cite{hoeffding1963,serfling1974} to each group rate \(p_{a,S}\),
and then use the triangle inequality and a union bound to obtain
\[
\Pr\!\left(|d_S-d_C|\ge \gamma \mid n_0,n_1\right)
\le 4\exp\!\left(-\frac{\gamma^2}{2}n_{\min}\right).
\]
With \(\gamma=\sqrt{2\ln(4/\delta)/n_{\min}}\), this probability is at most
\(\delta\). Hence, if \(|d_C|\le \mtol-\gamma\), then \(|d_S|\le \mtol\)
with probability at least \(1-\delta\). See Appendix~\ref{app:proof_manip_sufficient_condition} for the detailed proof.
\end{proof}

\begin{table*}[h!]
\centering
\caption{Overview of the characteristics of the considered datasets.}
\label{tab:datasets}
\resizebox{0.7\linewidth}{!}{
\begin{tabular}{llcccccccc}
\hline
\textbf{Dataset} & \textbf{\begin{tabular}[c]{@{}c@{}}Protected\\ Attribute\end{tabular}} & \textbf{\begin{tabular}[c]{@{}c@{}}Dataset\\ Size\end{tabular}} & $Acc$ & \textbf{$|d_{C,\text{true}}|$} & \textbf{$N$} & \textbf{$n$} & $N_\text{min}$ & $n_\text{min}$ & \textbf{$\frac{N_{\text{min}}}{N}$} \\
\hline
\multirow{4}{*}{CCD} & gender &\multirow{4}{*}{30,000} & 0.823 & 0.028 & \multirow{4}{*}{13,500} & \multirow{4}{*}{6,750} & 5,350 & 2,675 & 0.396 \\
 & education & & 0.82 & 0.044 &  &  & 2,417 & 1,209 & 0.179 \\
 & marriage &  & 0.821 & 0.009 &  &  & 6,147 & 3,073 & 0.455 \\
 & age &  & 0.82 & 0.027 &  &  & 4,956 & 2,478 & 0.367 \\
\hline
\multirow{2}{*}{COMPAS} & gender & \multirow{2}{*}{6,172} & 0.66 & 0.148 & \multirow{2}{*}{2,778} & \multirow{2}{*}{1,389} & 529 & 264 & 0.19 \\
 & race &  & 0.67 & 0.164 &  &  & 947 & 473 & 0.341 \\
\hline
\multirow{1}{*}{hateday} & language & 81,000 & 0.95 & 0.008 & 36,450 & 18,225 & 3,038 & 1,519 & 0.083 \\
\hline
\end{tabular}
}
\end{table*}

\section{Experimental Evaluation}\label{sec:expes}

To illustrate and empirically validate our results, we selected three datasets spanning tabular and text domains: Default of Credit Card Clients (CCD)~\cite{Yeh2009TheCO}, COMPAS~\cite{larsonHowWeAnalyzed2016}, and HateDay~\cite{tonneauHateDayInsightsGlobal2025}. 
\revision{
We extracted numerical features from the HateDay text data using an off-the-shelf sentence transformer\footnote{\url{huggingface.co/sentence-transformers/distiluse-base-multilingual-cased-v2}}.
For a fixed protected attribute, each dataset was randomly split $70/30$ into a training set and a candidate set, using stratification with respect to the sensitive attribute. 
The model and data pre-processing is a histogram gradient boosted tree baseline from the \texttt{skrub} library\footnote{\url{skrub-data.org/stable/reference/generated/skrub.tabular_pipeline.html}}.
We tuned the learning rate, number of leaf nodes and number of estimators via random search with $5$-fold cross-validation on the training set.
}
After hyperparameter selection, we retrained the model on the full training set. 
Our source code is available in our online repository\footnote{\url{github.com/sofianeazogagh/oblivious_audit}}, which includes all scripts and datasets required to reproduce our experiments and results, as well as our PIR-based auditing framework implemented as a user-friendly module.

\Cref{tab:datasets} summarizes the main characteristics of each dataset. 
Specifically, for each dataset we report the model accuracy ($Acc$) and the demographic parity gap ($|d_{C,\text{true}}|$) before any manipulation, both measured on the candidate set $C$. 
We also report the candidate set size $N$, the audit set size $n$ as well as the size of the smallest protected group in each set (respectively $N_{\min}$ and $n_{\min}$). 
As shown in the last column, some setups are highly imbalanced. 
For example, in HateDay the smallest protected group represents only about $8\%$ of the candidate set.

We then simulate audit manipulation, in which a malicious provider attempts to hide half of its unfairness. 
That is, for each dataset and protected attribute, we set $\varepsilon = |d_{C,\text{true}}|/2$. 
We leverage \Cref{thm:n_manipulation_vanilla_audit} and~\Cref{thm:n_manipulation_respir_audit} to compute the number of output modifications required to hide this amount of unfairness against a traditional (``vanilla'') black-box audit and against a \ac{respir}-based audit. 
For \ac{respir}, the provider targets passing probability $1-\delta = 80\%$. 
Finally, we plug these modification counts into \Cref{eq:manipulation_detection_proba} to obtain the corresponding probability of manipulation detection in a setting where the auditor has a verification set of $k \in \{5, 10, 20, 50\}$ canaries (see \Cref{sec:manip}). 
Note that Theorem~\ref{thm:n_manipulation_respir_audit} should be interpreted as an analytical characterization of manipulation difficulty, rather than as an operational recipe. 
Indeed, the bound depends on quantities that are not all known to the auditor, such as the platform's true candidate-set disparity, and, under random sampling, on quantities not known in advance to the platform, such as the realized value of \(n_{\min}\).

The results are summarized in Table~\ref{tab:main_results}, \revision{and displayed in Figure~\ref{fig:results} for a subset of our experiments}. 
As expected, the number of output manipulations required to hide unfairness increases substantially under \ac{respir} compared to a traditional (``vanilla'') black-box audit. 
This, in turn, leads to systematically higher probabilities of detecting such manipulation, often by roughly a factor of two, which is particularly noteworthy since the final audit set size $n = |S|$ is identical for both audit frameworks.
Comparing results across datasets and protected attributes and relating them to the dataset characteristics summarized in Table~\ref{tab:datasets}, two main factors drive detectability. 
First, protected group imbalance greatly facilitates manipulation. 
For instance, for the highly imbalanced HateDay setting, the probability of detecting manipulation remains very small even with $k=50$ canaries. 
Second, larger unfairness levels to be hidden require more output flips, which increases detectability. 
For example, the highest detection probabilities for both audit frameworks occur for COMPAS with race as the protected attribute, which is also the configuration with the largest initial unfairness.
\revision{As shown in Figure~\ref{fig:results}, the number of canaries owned by the auditor remains a crucial parameter, strongly influencing manipulation detectability for both methods.}

 \begin{table}[t!]
 \centering
 \caption{Manipulation cost and manipulation detection probability for each dataset and protected attribute. 
 For each setting, we report the required number of output flips to hide half of the initial unfairness under a vanilla black-box audit and under \ac{respir}, together with the resulting detection probabilities for different numbers of canaries $k$.}
 \label{tab:main_results}
 \scriptsize
\setlength{\tabcolsep}{2.5pt}
\resizebox{0.92\linewidth}{!}{
\begin{tabular}{m{1cm}m{1cm}m{0.65cm}m{0.95cm}m{0.50cm}m{0.9cm}m{1.1cm}}
\hline
\textbf{Dataset}         & \textbf{\begin{tabular}[c]{@{}l@{}}Protected\\ Attribute\end{tabular}} & \textbf{$m_{\text{vanilla}}$} & \textbf{$m_{\text{\ac{respir}}}$} & \textbf{$k$} & \textbf{$P_{\text{detect}}$ (vanilla)} & \textbf{$P_{\text{detect}}$ (\ac{respir})} \\ \hline
\multirow{4}{*}{CCD}     & \multirow{4}{*}{gender}                                                & \multirow{4}{*}{38}           & \multirow{4}{*}{152}              & 5            & 2.23\%                                 & 4.42\%                                     \\
                         &                                                                        &                               &                                   & 10           & 4.41\%                                 & 8.65\%                                     \\
                         &                                                                        &                               &                                   & 20           & 8.63\%                                 & 16.55\%                                    \\
                         &                                                                        &                               &                                   & 50           & 20.20\%                                & 36.39\%                                    \\ \hline
\multirow{4}{*}{CCD}     & \multirow{4}{*}{education}                                             & \multirow{4}{*}{27}           & \multirow{4}{*}{107}              & 5            & 1.59\%                                 & 3.13\%                                     \\
                         &                                                                        &                               &                                   & 10           & 3.15\%                                 & 6.16\%                                     \\
                         &                                                                        &                               &                                   & 20           & 6.21\%                                 & 11.95\%                                    \\
                         &                                                                        &                               &                                   & 50           & 14.81\%                                & 27.24\%                                    \\ \hline
\multirow{4}{*}{CCD}     & \multirow{4}{*}{marriage}                                              & \multirow{4}{*}{14}           & \multirow{4}{*}{56}               & 5            & 0.83\%                                 & 1.65\%                                     \\
                         &                                                                        &                               &                                   & 10           & 1.65\%                                 & 3.27\%                                     \\
                         &                                                                        &                               &                                   & 20           & 3.27\%                                 & 6.43\%                                     \\
                         &                                                                        &                               &                                   & 50           & 7.97\%                                 & 15.31\%                                    \\ \hline
\multirow{4}{*}{CCD}     & \multirow{4}{*}{age}                                                   & \multirow{4}{*}{34}           & \multirow{4}{*}{134}              & 5            & 2.00\%                                 & 3.91\%                                     \\
                         &                                                                        &                               &                                   & 10           & 3.96\%                                 & 7.66\%                                     \\
                         &                                                                        &                               &                                   & 20           & 7.76\%                                 & 14.74\%                                    \\
                         &                                                                        &                               &                                   & 50           & 18.28\%                                & 32.88\%                                    \\ \hline
\multirow{4}{*}{COMPAS}  & \multirow{4}{*}{gender}                                                & \multirow{4}{*}{20}           & \multirow{4}{*}{79}               & 5            & 5.63\%                                 & 10.87\%                                    \\
                         &                                                                        &                               &                                   & 10           & 10.94\%                                & 20.56\%                                    \\
                         &                                                                        &                               &                                   & 20           & 20.68\%                                & 36.89\%                                    \\
                         &                                                                        &                               &                                   & 50           & 43.97\%                                & 68.36\%                                    \\ \hline
\multirow{4}{*}{COMPAS}  & \multirow{4}{*}{race}                                                  & \multirow{4}{*}{39}           & \multirow{4}{*}{156}              & 5            & 10.74\%                                & 20.53\%                                    \\
                         &                                                                        &                               &                                   & 10           & 20.32\%                                & 36.85\%                                    \\
                         &                                                                        &                               &                                   & 20           & 36.51\%                                & 60.12\%                                    \\
                         &                                                                        &                               &                                   & 50           & 67.89\%                                & 89.96\%                                    \\ \hline
\multirow{4}{*}{hateday} & \multirow{4}{*}{language}                                              & \multirow{4}{*}{7}            & \multirow{4}{*}{26}               & 5            & 0.15\%                                 & 0.28\%                                     \\
                         &                                                                        &                               &                                   & 10           & 0.31\%                                 & 0.57\%                                     \\
                         &                                                                        &                               &                                   & 20           & 0.61\%                                 & 1.14\%                                     \\
                         &                                                                        &                               &                                   & 50           & 1.52\%                                 & 2.81\%                                     \\ \hline
\end{tabular}
}

 \end{table}
 
\begin{figure}
\centering
     \includegraphics[width=0.37\textwidth]{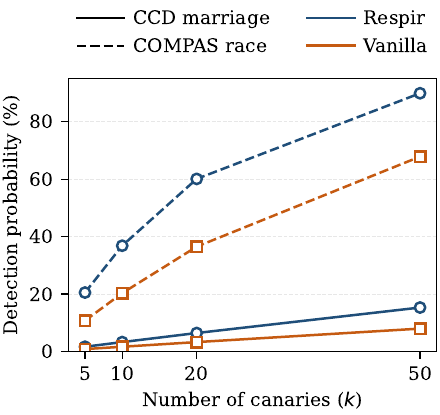}
     \caption{\revision{Probability of manipulation detection for a subset of our experiments, as a function of the number of canaries $k$, for vanilla audits and for our proposed \acs{respir} audit framework.}}
     \label{fig:results}
 \end{figure}
 
It is also worth noting that when the tolerance $\mtol$ is close to zero, and in particular when it is smaller than $\gamma = \sqrt{\frac{2\ln(4/\delta)}{n_{\min}}}$ (which can occur, for instance, when $n_{\min}$ is small), the sufficient target
\[
|d_C| \;\le\; \mtol \;-\; \sqrt{\frac{2\ln(4/\delta)}{n_{\min}}}
\]
for passing with probability at least $1-\delta$ might become unattainable, since $|d_C|\ge 0$. 
In our experiments, such cases arise for some configurations in which $|d_{C,\text{true}}|$ is already very small. 
In these regimes, the provider may be unable to reach the desired passing confidence level $1-\delta$ for the \ac{respir} audit. Nevertheless, we still report its best-effort manipulation, namely the number of output modifications required to achieve $|d_C|=0$, which, as reflected in our results, still tends to make manipulation more detectable by the auditor.

Overall, these findings confirm the purpose of \ac{respir}. 
By hiding the audited set $S$ through PIR, our protocol forces malicious providers to spread manipulations over a larger fraction of the candidate set, thereby increasing the effort required to hide unfairness and substantially improving the probability of detecting such manipulation.

\section{Limitations, Discussion and Conclusion}
\label{sec:conclusion}

\subsection{\revision{Applicability of the Proposed Approach}}
\label{sec:applicability}

\paragraph{\revision{Audited Model.}} \revision{
Our proposed framework, as well as the accompanying theoretical analysis, is agnostic to the type of audited model. 
Indeed, the provider’s model class matters only if the provider must realize flips by retraining a new classifier within the same hypothesis space. 
Our adversary model is more powerful as the provider can patch the deployed system by overriding outputs on selected inputs without retraining. 
This is at the same time realistic but also a worst-case situation for the auditor. 
If the provider was not able to flip selected answers without a full model retraining, arbitrary flips might be infeasible and the provider could need more changes to hide unfairness, so our lower bounds (Proposition~\ref{thm:n_manipulation_vanilla_audit} and Theorem~\ref{thm:n_manipulation_respir_audit}, holding for any input-output mapping) would remain valid but potentially conservative.
}

\paragraph{\revision{Audited Metric.}}\revision{While in this paper, we focus on the demographic parity metric, our approach naturally extends to auditing other model properties. Indeed, the PIR-based component treats the fairness computation as a modular measurement block, so our proposed auditing framework is metric-agnostic.
In particular, Proposition~\ref{thm:n_manipulation_vanilla_audit} and Theorem~\ref{thm:n_manipulation_respir_audit} can be straightforwardly adapted to other fairness criteria (including separation- and sufficiency-based metrics, which encompasses but is not limited to group fairness metrics based on true positive rates (equal opportunity), false positive rates (predictive equality) or both (equalized odds)), by incorporating the proper sensitivity value suited to the audited measure. Similar guarantees regarding resilience to manipulation would then hold, as illustrated for the equal opportunity metric in Appendix~\ref{app:equal-opportunity}.
On the same line, our framework and guarantees can also be applied to setups with several sensitive attributes or multi-class problems, as long as the sensitivity of the audited function can be analytically expressed or bounded (as in the proofs of Proposition~\ref{thm:n_manipulation_vanilla_audit} and Theorem~\ref{thm:n_manipulation_respir_audit}), key to lower bounding the number of flips required to fool the audit. 
We therefore expect the same modularity could support regression as well, though fairness for regression has been less explored in the literature.}

\subsection{\revision{Discussion}}

We studied the problem of auditing fairness claims in adversarial settings, in which an audited provider may adapt its outputs once it can anticipate which inputs will be inspected. 
Our main contribution is \acs{respir}, an oblivious auditing protocol that leverages PIR to hide the auditor’s queries within a larger candidate set. 
By construction, this prevents the provider from patching a known audit set.
Instead, passing with high probability requires spreading modifications over a nontrivial fraction of the candidate set. 
We formalized this intuition through concentration-based guarantees that lower bound the manipulation effort needed to pass, and we combined these bounds with a canary-based verification mechanism to translate manipulation effort into a detection probability. 
Our experiments on tabular and text datasets confirms that for a fixed audit set size, \acs{respir} increases the number of required output flips and, consequently, improves the detectability of targeted manipulation, with the strongest gains arising when the candidate set is large and sufficiently balanced across protected groups.

Nevertheless, as illustrated by our experiments, strong protected group imbalance and small audit sets remain limiting factors. 
The manipulation guarantees in \Cref{sec:manip} and their empirical evaluation in \Cref{sec:expes} highlight three main design principles:
\begin{enumerate}
\item \emph{Large and representative candidate sets.}
As the best-case manipulation cost scales with $\min(N_0,N_1)$, the candidate set should be large and should contain nontrivial representation of each protected group.
\item \emph{Balanced auditing.}
Since detection power scales with $\min(n_0,n_1)$, the auditor should sample $S$ using a rule that avoids severe group imbalance; stratified sampling from $C$ is a natural option.
\item \emph{Hiding the audit set is necessary but not always sufficient.}
PIR prevents the provider from learning \emph{which} queries are in the audit set, blocking the cheapest form of manipulation (patching exactly $S$, as in \Cref{thm:n_manipulation_vanilla_audit}). 
However, our guarantees rely on the candidate set capturing a meaningful portion of the provider’s operational domain. 
If $C$ is not representative, then passing the audit may not translate into improved fairness outside~$C$.
\end{enumerate}

\paragraph{Limitations.}
We view \acs{respir} as a proof-of-concept showing how cryptographic primitives can be leveraged to hide the actual queries used during an algorithmic audit. 
Its main benefit is to prevent audit set adaptivity: the platform can no longer tailor its responses to the particular audit set \(S\), but must instead commit to outputs over a larger candidate set \(C\). 
This makes manipulation more costly and, when combined with external verification queries, more detectable. 
Nonetheless, \acs{respir} also has inherent limitations.

First, because the candidate set \(C\) is known to both the platform and the auditor, the protocol does not prevent all forms of manipulation. 
A strategic platform can still attempt to modify its outputs over \(C\) so as to satisfy the audited fairness criterion with high probability. 
A stronger alternative would be to require a private-inference API, for instance based on verifiable homomorphic encryption or on hybrid cryptographic protocols, allowing the auditor to evaluate the commited model on private inputs without revealing the audited queries to the platform. 
Conceptually, this would remove the need for an explicit shared candidate set, since the auditor's query domain would no longer have to be enumerated in advance. 
In practice, however, such a requirement remains restrictive.
Encrypted inference typically incurs substantial computational and communication overhead, and often requires model-specific adaptations such as polynomial approximations, quantization, or hybrid protocols for non-linear operations~\citep{giladbachrach2016cryptonets,brutzkus2019lola,juvekar2018gazelle,mishra2020delphi,gong2024practical}, not to mention the verification step which introduces additional overhead. 
Therefore, we view the use of a fixed candidate set as a pragmatic compromise between limiting the platform's computational burden, ensuring deployment feasibility, and preserving audit usefulness.

Second, manipulation detectability could be further enhanced by using an iterative protocol, in which the auditor progressively requires the platform to label the candidate set \(C\), in an order designed to trigger its worst-case manipulation bounds. 
For instance, the auditor could first require the platform to label the smallest protected group, thereby constraining the platform's subsequent manipulation strategy and potentially forcing its final flips to be performed on the largest subgroup, resulting in more changes and higher detectability. 
This would prevent the platform from reaching its lower-cost manipulation bounds, making manipulation more difficult in practice. 
However, such an extension would need to be designed carefully to preserve the non-adaptivity property of \acs{respir}; otherwise, the iterative structure itself could reveal information that the platform may exploit strategically.

\subsection{\revision{Conclusion}}

Overall, \acs{respir} is a step toward manipulation-resistant audits combining cryptographic query hiding with probabilistic guarantees. 
We hope this work will motivate further research at the intersection of auditing, cryptography and trustworthy machine learning, and inform the design of regulatory audits robust to adversarial environments.  

Several directions remain open. A first avenue is to extend the analysis beyond demographic parity to separation- and sufficiency-based criteria, as well as to settings with continuous outputs or multi-class protected attributes. 
A second direction is to study candidate-set design in greater depth, including how to construct representative candidate sets under limited data access, how to allocate audit budgets across groups to mitigate imbalance, and how to choose the confidence parameter to balance robustness and audit cost. 
Finally, from a systems perspective, deploying \acs{respir} in real auditing pipelines calls for exploring the performance trade-offs of different PIR instantiations, robust engineering under rate limits and latency constraints and protocol mechanisms for logging and accountability that remain compatible with query privacy.

\section*{Acknowledgments}
This research was enabled in part by funding from the SCALE-AI Chair in Data-Driven Supply Chains as well as by the \emph{Fonds de recherche du Québec} -- \emph{Nature et technologies (FRQNT)} through a Team Research Project \emph{(327090)}. Sébastien Gambs is supported by the Canada Research Chair program (CRC on Privacy-preserving and Ethical Analysis of Big Data) and a Discovery Grant from NSERC. The authors would like to thank the anonymous reviewers for their valuable suggestions.

\bibliography{pir,sample-base}

\onecolumn

\appendix
\section{Interactions during the PIR}\label{appendix:pir_scheme}

Figure~\ref{fig:verisimplepir} provides a detailed description of the interaction during the PIR between the auditor and the provider using VeriSimplePIR~\cite{verisimplepir24}.

\begin{figure*}[h!]
    \centering
    \begin{tikzpicture}[font=\small]
    
    \def\W{0.7\textwidth}
    \def\pad{6pt}
    
    \node[draw, inner sep=\pad, line width=0.8pt] (FRAME) {
    \begin{minipage}{\W}
    
    \begin{center}
    \begin{tabular*}{\W}{@{\extracolsep{\fill}}ccc}
    \underline{\textbf{Auditor}}
    &
    &
    \underline{\textbf{Provider}}
    \end{tabular*}
    \end{center}
    
    \vspace{2mm}
    \hrule
    \vspace{2mm}

    \begin{tabular*}{\W}{@{\extracolsep{\fill}}lll}
    Request an audit & &
    \end{tabular*}
    
    \begin{center}
    \underline{\textbf{Offline Commitment}}
    \end{center}
    
    \vspace{1mm}
    
    \begin{tabular*}{\W}{@{\extracolsep{\fill}}lll}
                     &                              & $\D \in \Z_p^{\ell \times \mu}$ \\[0.5mm] 
                     &                              & Uniformly sample $A \in \Z_q^{\mu\times n}$ \\[0.5mm] 
                     &                              & $H \gets \D \cdot A$ \\[0.5mm]
                     &                              & $C \gets \Hash(A,H)$ \\[0.5mm]
                     & \hspace{1.3cm}$\xleftarrow{\qquad A,H,Z \qquad}$ & $Z \gets C \cdot \D$ \\[0.5mm]
    $C \gets \Hash(A,H)$ &  &
    \end{tabular*}
    
    \vspace{1mm}
    \begin{center}
    Return \texttt{Fail} if $||Z||_{\infty} > p\cdot \ell$ or $ZA \neq CH$
    \end{center}
    
    \vspace{2mm}
    \hrule
    \vspace{2mm}
    
    \begin{center}
    \underline{\textbf{Query}}
    \end{center}
    
    \vspace{1mm}
    
    \begin{tabular*}{\W}{@{\extracolsep{\fill}}lll}
    Choose an index $i=(i_r,i_c) \in \Z_\ell \times \Z_\mu$ &                   & \\[0.5mm]
    $b_{i_c} \gets \OHE(i_c)$  &                   & \\[0.5mm]
    \Comment{Regev's Encryption using the generated A during the commitment}  &                   & \\[0.5mm]
    Sample a uniform secret $s \in \Z_q^n$  &                   & \\[0.5mm]
    Sample an error vector $e \in \chi^\mu$  &                   & \\[0.5mm]
    $\enc{b_{i_c}} = As + e + \lfloor \frac{q}{p} \rfloor \cdot b_{i_c} $ & & \\[0.5mm]
     & \hspace{-5.3cm}$\xrightarrow{\qquad \enc{b_{i_c}} \qquad}$ & \\
    \end{tabular*}

    \vspace{2mm}
    \hrule
    \vspace{2mm}

    \begin{center}
    \underline{\textbf{Answer}}
    \end{center}

    \vspace{3mm}

    \begin{tabular*}{\W}{@{\extracolsep{\fill}}lll}
    &  & $v \gets \D \times \enc{b_{i_c}}$ \\[0.5mm]
    & \hspace{1.7cm}$\xleftarrow{\qquad \enc{r} \qquad}$ & \\
    \end{tabular*}

    \vspace{2mm}
    \hrule
    \vspace{2mm}

    \begin{center}
    \underline{\textbf{Recover}}
    \end{center}

    \vspace{3mm}

    \begin{tabular*}{\W}{@{\extracolsep{\fill}}lll}
    $\tilde{r} \gets v[i_r] - \langle H[i_r],s \rangle$ &  &  \\[0.5mm]
    $r \gets \lfloor\frac{p\cdot \tilde{r}}{q} \rceil$ &  &  \\[0.5mm]
    \end{tabular*}

    \vspace{5mm}

    \end{minipage}
    };
    
    \end{tikzpicture}
    \caption{The interaction between the auditor and the provider based on the SimplePIR protocol~\cite{simplepir23} and using the commitment to the database from VeriSimplePIR~\cite{verisimplepir24}. More details can be found in \cite{verisimplepir24} (Construction 5.1) to verify the integrity of the queries.}
    \label{fig:verisimplepir}
\end{figure*}

\section{Proof of Theorem~\ref{cor:manip_sufficient_condition}} \label{app:proof_manip_sufficient_condition}

\begin{proof}
Fix a realization of $(n_0,n_1)$ and let $n_{\min}:=\min(n_0,n_1)$.
Recall that $d_C = p_{1,C}-p_{0,C}$ and $d_S = p_{1,S}-p_{0,S}$, in which $p_{a,C}$ (resp.\ $p_{a,S}$) denotes the protected group-$a$ positive prediction rate on $C$ (resp.\ on $S$).

Conditioning on $n_a$, the quantity $p_{a,S}$ is the average of $n_a$ binary values in $[0,1]$ obtained by sampling $n_a$ elements uniformly without replacement from the $N_a$ elements of $C$ in group $a$.
For i.i.d.\ sampling with replacement, Hoeffding's inequality for bounded random variables yields, for any $t>0$,
\[
\Pr\!\left(\left|p_{a,S}-p_{a,C}\right|\ge t \,\middle|\, n_a\right)
\;\le\;
2\exp(-2n_a t^2).
\]
The same exponential bound remains valid under uniform sampling without replacement~\citep{hoeffding1963}, so we use it here.

Using the decomposition
\[
d_S-d_C
=
(p_{1,S}-p_{1,C})-(p_{0,S}-p_{0,C}),
\]
and the triangle inequality,
\[
|d_S-d_C|
\le
|p_{1,S}-p_{1,C}|+|p_{0,S}-p_{0,C}|,
\]
we obtain the event inclusion (as at least one of the two terms must be greater than $\gamma/2$ for their sum to be greater than $\gamma$):
\[
\begin{aligned}
\{|d_S-d_C|\ge \gamma\}
\subseteq\;&\{|p_{1,S}-p_{1,C}|\ge \gamma/2\}\\
&\cup\{|p_{0,S}-p_{0,C}|\ge \gamma/2\}.
\end{aligned}
\]
Applying the union bound and the concentration inequality above gives
\begin{align*}
\Pr(|d_S-d_C|\ge \gamma \mid n_0,n_1)
&\le
\Pr\!\left(|p_{1,S}-p_{1,C}|\ge \gamma/2 \mid n_1\right)\\
&\quad+
\Pr\!\left(|p_{0,S}-p_{0,C}|\ge \gamma/2 \mid n_0\right)\\
&\le
2\exp\!\left(-2n_1(\gamma/2)^2\right)\\
&\quad+
2\exp\!\left(-2n_0(\gamma/2)^2\right)\\
&\le
4\exp\!\left(-\frac{\gamma^2}{2}\,n_{\min}\right).
\end{align*}

Set $\gamma := \sqrt{\frac{2\ln(4/\delta)}{n_{\min}}}$. Then
\[
\begin{aligned}
\Pr(|d_S-d_C|\ge \gamma \mid n_0,n_1)
&\le 4\exp(-\ln(4/\delta)) \\
&= \delta.
\end{aligned}
\]
Now assume $|d_C|\le \mtol-\gamma$.
Whenever $|d_S-d_C|<\gamma$, the triangle inequality yields
\[
|d_S|\le |d_C|+|d_S-d_C| < (\mtol-\gamma)+\gamma=\mtol.
\]
Hence $\{|d_S|>\mtol\}\subseteq \{|d_S-d_C|\ge \gamma\}$ and therefore
\[
\Pr(|d_S|>\mtol \mid n_0,n_1)\le \delta,
\]
which is equivalent to $\Pr(|d_S|\le \mtol \mid n_0,n_1)\ge 1-\delta$.
\end{proof}

We note that, since sampling occurs without replacement from the finite candidate set $C$, these bounds can actually be tightened through a finite-population correction. 
For completeness, we detail this refinement in Appendix~\ref{app:finite-population}, while retaining the ordinary Hoeffding bound in the main text for simplicity and as a conservative worst-case bound.
\section{Illustrative Extension with Equal Opportunity}
\label{app:equal-opportunity}

The analysis of Theorem~\ref{thm:n_manipulation_respir_audit} relies on two properties of demographic parity: the audited metric is a difference between two empirical rates and the effect of a single output flip on this difference can be bounded. 
The same argument applies to other group fairness metrics after identifying the appropriate conditional populations and single-flip sensitivity. 
We illustrate this adaptation for the equal opportunity metric \cite{hardt2016equality}, which quantifies differences in true positive rates across protected groups.

Assume that each $x\in C$ is associated with a binary ground-truth
label $Y(x)\in\{0,1\}$. 
For each protected group $a\in\{0,1\}$,
define
\[
C_a^+
:=
\{x\in C:A(x)=a,\ Y(x)=1\},
\qquad
N_a^+
:=
|C_a^+|,
\]
and similarly
\[
S_a^+
:=
\{x\in S:A(x)=a,\ Y(x)=1\},
\qquad
n_a^+
:=
|S_a^+|.
\]
The true positive rate of group $a$ on $C$ is
\[
p_{a,C}^{\mathrm{EO}}
:=
\frac{1}{N_a^+}
\sum_{x\in C_a^+}h(x),
\]
and the equal opportunity gap on $C$ is
\[
d_C^{\mathrm{EO}}
:=
p_{1,C}^{\mathrm{EO}}
-
p_{0,C}^{\mathrm{EO}}.
\]
The quantities $p_{a,S}^{\mathrm{EO}}$ and
$d_S^{\mathrm{EO}}$ are defined analogously on $S$. As in Section \ref{subsec:black-box_audit}, we omit the classifier $h$ in notation since it is clear from context. Finally, let
\[
N_{\min}^+
:=
\min(N_0^+,N_1^+),
\qquad
n_{\min}^+
:=
\min(n_0^+,n_1^+).
\]

\begin{theorem}[\ac{respir} manipulation bound for equal opportunity]
\label{thm:respir-equal-opportunity}
Let $C\subset\mathcal{X}$ be the candidate set and let
$S\subset C$ be sampled uniformly at random without replacement.
Assume that $N_a^+,n_a^+>0$ for each $a\in\{0,1\}$. Let
$d_{C,\text{true}}^{\mathrm{EO}}$ denote the equal opportunity gap
of the provider's model on $C$ before manipulation, and let
$\varepsilon$ be the audit threshold.

For $\delta\in(0,1/2]$, define
\[
\gamma_{\mathrm{EO}}(\delta)
:=
\sqrt{
\frac{2\ln(4/\delta)}{n_{\min}^+}
}.
\]
If the post-manipulation candidate-set gap satisfies
\[
|d_C^{\mathrm{EO}}|
\leq
\varepsilon-\gamma_{\mathrm{EO}}(\delta),
\]
then
\[
\Pr\!\left(
    |d_S^{\mathrm{EO}}|\leq\varepsilon
    \,\middle|\,
    n_0^+,n_1^+
\right)
\geq 1-\delta.
\]

Consequently, assuming that the original (non-manipulated) model would fail the
equal opportunity audit, \emph{i.e.},
\[
|d_{C,\text{true}}^{\mathrm{EO}}|>\varepsilon,
\]
any set of output flips that enforces this sufficient condition must
contain at least
\[
m_{\mathrm{EO}}
\geq
\left\lceil
\left(
|d_{C,\text{true}}^{\mathrm{EO}}|
-\varepsilon
+\gamma_{\mathrm{EO}}(\delta)
\right)N_{\min}^+
\right\rceil
\]
flipped predictions on examples with $Y(x)=1$.
\end{theorem}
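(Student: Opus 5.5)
The plan is to mirror the two-step structure used for Theorem~\ref{thm:n_manipulation_respir_audit}: first a concentration statement (the analogue of Theorem~\ref{cor:manip_sufficient_condition}) restricted to the positive-label subpopulations, then a single-flip sensitivity argument on $C$.

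\emph{Concentration step.} We condition on $(n_0^+,n_1^+)$. Because $S$ is drawn uniformly without replacement from $C$, conditionally on $n_a^+$ the set $S_a^+$ is a uniform sample without replacement of size $n_a^+$ from $C_a^+$. This holds since every subset of $C_a^+$ of a given size is equally likely, by exchangeability. Hence $p_{a,S}^{\mathrm{EO}}$ is the mean of $n_a^+$ binary values drawn without replacement from a population whose mean is $p_{a,C}^{\mathrm{EO}}$. Hoeffding's bound, which remains valid without replacement \citep{hoeffding1963}, gives
\[
\Pr\bigl(|p_{a,S}^{\mathrm{EO}}-p_{a,C}^{\mathrm{EO}}|\ge t \mid n_a^+\bigr)\le 2\exp(-2n_a^+t^2).
\]
We decompose $d_S^{\mathrm{EO}}-d_C^{\mathrm{EO}}$ into the two group deviations and apply the triangle inequality. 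If the total deviation is at least $\gamma$, then one group deviation is at least $\gamma/2$, so a union bound gives
\[
\Pr\bigl(|d_S^{\mathrm{EO}}-d_C^{\mathrm{EO}}|\ge\gamma\bigr)\le 4\exp\!\left(-\tfrac{\gamma^2}{2}n_{\min}^+\right).
\]
Setting $\gamma=\gamma_{\mathrm{EO}}(\delta)$ makes this bound equal to $\delta$. Under $|d_C^{\mathrm{EO}}|\le\varepsilon-\gamma$, the triangle inequality then gives $|d_S^{\mathrm{EO}}|\le\varepsilon$ except on an event of probability at most $\delta$. This is exactly the proof in Appendix~\ref{app:proof_manip_sufficient_condition} with $n_a$ replaced by $n_a^+$.

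\emph{Counting step.} Assume without loss of generality that $d_{C,\text{true}}^{\mathrm{EO}}>0$. Flipping a prediction on an example with $Y(x)=0$ leaves every $p_{a,C}^{\mathrm{EO}}$ unchanged. Such flips are therefore useless, and only flips with $Y(x)=1$ matter, which is why the bound counts those. A flip on an example in $C_a^+$ changes $p_{a,C}^{\mathrm{EO}}$ by exactly $1/N_a^+$ and leaves the other group's rate unchanged. So each flip moves $d_C^{\mathrm{EO}}$ by at most $1/N_{\min}^+$. To go from $d_{C,\text{true}}^{\mathrm{EO}}$ to at most $\varepsilon-\gamma_{\mathrm{EO}}(\delta)$, the gap must decrease by $|d_{C,\text{true}}^{\mathrm{EO}}|-\varepsilon+\gamma_{\mathrm{EO}}(\delta)$. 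Hence at least $(|d_{C,\text{true}}^{\mathrm{EO}}|-\varepsilon+\gamma_{\mathrm{EO}}(\delta))N_{\min}^+$ flips on positive examples are needed, and integrality gives the ceiling.

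\emph{Main obstacle.} The only delicate point is justifying that conditioning on $n_a^+$ yields a uniform without-replacement subsample of $C_a^+$. This is needed because the group sizes are random under uniform sampling of $S$. We will handle it by the exchangeability argument above. The assumption $n_a^+>0$ guarantees that the rates are well defined. We should also note that when $\varepsilon<\gamma_{\mathrm{EO}}(\delta)$ the sufficient condition is vacuous, which the bound tolerates.
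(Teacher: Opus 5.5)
Your proposal is correct and follows the paper's proof essentially step for step. You condition on $(n_0^+,n_1^+)$ so that each $S_a^+$ is a uniform without-replacement subsample of $C_a^+$, apply Hoeffding's bound per stratum, combine via the triangle inequality and a union bound to get $4\exp(-\gamma^2 n_{\min}^+/2)=\delta$, and then use the $1/N_{\min}^+$ single-flip sensitivity (flips with $Y(x)=0$ having no effect) to obtain the ceiling bound; your explicit exchangeability justification of the conditional uniformity is a small addition that the paper only asserts.
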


\begin{proof}
Fix a realization of $(n_0^+,n_1^+)$. Conditional on $n_a^+$,
the set $S_a^+$ is a uniformly random subset of size $n_a^+$ of
$C_a^+$. 
Thus, $p_{a,S}^{\mathrm{EO}}$ is the average of
$n_a^+$ binary predictions sampled uniformly without replacement
from the $N_a^+$ predictions in $C_a^+$.

Hoeffding's inequality for sampling without replacement gives, for
every $t>0$,
\[
\Pr\!\left(
    \left|
    p_{a,S}^{\mathrm{EO}}
    -
    p_{a,C}^{\mathrm{EO}}
    \right|
    \geq t
    \,\middle|\,
    n_a^+
\right)
\leq
2\exp(-2n_a^+t^2).
\]
Using
\[
d_S^{\mathrm{EO}}-d_C^{\mathrm{EO}}
=
\left(
p_{1,S}^{\mathrm{EO}}-p_{1,C}^{\mathrm{EO}}
\right)
-
\left(
p_{0,S}^{\mathrm{EO}}-p_{0,C}^{\mathrm{EO}}
\right),
\]
the triangle inequality and a union bound yield
\[
\Pr\!\left(
    \left|
    d_S^{\mathrm{EO}}-d_C^{\mathrm{EO}}
    \right|
    \geq\gamma
    \,\middle|\,
    n_0^+,n_1^+
\right)
\leq
4\exp\left(
    -\frac{\gamma^2}{2}n_{\min}^+
\right).
\]
Setting
\[
\gamma
=
\gamma_{\mathrm{EO}}(\delta)
=
\sqrt{
\frac{2\ln(4/\delta)}{n_{\min}^+}
}
\]
makes the right-hand side equal to $\delta$. 
Hence, if
\[
|d_C^{\mathrm{EO}}|
\leq
\varepsilon-\gamma_{\mathrm{EO}}(\delta),
\]
then $|d_S^{\mathrm{EO}}|\leq\varepsilon$ with conditional
probability at least $1-\delta$.

We now bound the number of output flips required to enforce this
condition. 
Assume without loss of generality that
$d_{C,\text{true}}^{\mathrm{EO}}>0$. 
Flipping a prediction for an
example in $C_1^+$ from $1$ to $0$ decreases the equal opportunity
gap by $1/N_1^+$. 
Similarly, flipping a prediction for an example in
$C_0^+$ from $0$ to $1$ decreases the gap by $1/N_0^+$. 
In contrast, flips on examples with $Y(x)=0$ do not affect equal opportunity. 
Therefore,
a single output flip can decrease the gap by at most
\[
\max\left(
    \frac{1}{N_0^+},
    \frac{1}{N_1^+}
\right)
=
\frac{1}{N_{\min}^+}.
\]
Reducing the initial gap to at most
$\varepsilon-\gamma_{\mathrm{EO}}(\delta)$ thus requires at least
\[
\left\lceil
\left(
|d_{C,\text{true}}^{\mathrm{EO}}|
-\varepsilon
+\gamma_{\mathrm{EO}}(\delta)
\right)N_{\min}^+
\right\rceil
\]
output flips, which proves the result.
\end{proof}

This example illustrates the general adaptation of Theorem~1. 
For a fairness metric expressed as a difference between conditional rates, the relevant group sizes are those of the conditioning strata while the manipulation bound is determined by the maximum change in the metric caused by a single output flip. 
Metrics involving several simultaneous rate constraints, such as equalized odds, can be handled similarly by applying concentration bounds to each rate and combining them through an appropriate union bound.

\section{Refining the Bound through a Finite-Population Correction}
\label{app:finite-population}

Theorems~\ref{thm:n_manipulation_respir_audit} and \ref{cor:manip_sufficient_condition} apply a Hoeffding bound that is valid under sampling without replacement but does not exploit the finite population sizes $N_0$ and $N_1$. 
As a complementary result, we apply the finite-population refinement of the Hoeffding--Serfling inequality proposed by \citet[Theorem~2.4 and Corollary~2.5]{bardenet2015concentration}, which builds on and slightly improves the original bound of \citet{serfling1974}.

For each $a\in\{0,1\}$, define
\[
\rho_a :=
\begin{cases}
1-\dfrac{n_a-1}{N_a},
& \text{if } n_a\leq N_a/2, \\[6pt]
\left(1-\dfrac{n_a}{N_a}\right)
\left(1+\dfrac{1}{n_a}\right),
& \text{if } n_a>N_a/2,
\end{cases}
\]
and
\[
\gamma_{\mathrm{FPC}}(\delta)
:=
\sqrt{\frac{\rho_1\ln(4/\delta)}{2n_1}}
+
\sqrt{\frac{\rho_0\ln(4/\delta)}{2n_0}}.
\]

\begin{theorem}[Finite-population refinement of Theorem~1]
\label{thm:finite-population}
Consider the setting of Theorem~1 and fix $\delta\in(0,1)$. If the post-manipulation candidate-set gap satisfies
\[
|d_C|
\leq
\varepsilon-\gamma_{\mathrm{FPC}}(\delta),
\]
then
\[
\Pr\!\left(
|d_S|\leq\varepsilon
\,\middle|\,
n_0,n_1
\right)
\geq 1-\delta.
\]
Moreover, enforcing this sufficient condition from an initial gap $d_{C,\text{true}}$ requires at least
\[
m_{\mathrm{FPC}}
\geq
\left\lceil
\left(
|d_{C,\text{true}}|
-\varepsilon
+\gamma_{\mathrm{FPC}}(\delta)
\right)N_{\min}
\right\rceil
\]
output flips, where $N_{\min}:=\min(N_0,N_1)$.
\end{theorem}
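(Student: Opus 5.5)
The plan mirrors the proof of Theorem~\ref{cor:manip_sufficient_condition}, with the plain Hoeffding bound replaced by the finite-population Hoeffding--Serfling inequality of \citet{bardenet2015concentration}. The flip-counting step is then reused verbatim from Theorem~\ref{thm:n_manipulation_respir_audit}.

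\emph{Per-group concentration.} Fix a realization of $(n_0,n_1)$. Conditional on $n_a$, the set $S_a$ is a uniformly random subset of size $n_a$ drawn without replacement from the $N_a$ elements of $C_a$. Hence $p_{a,S}$ is the mean of $n_a$ draws without replacement from a finite population of values in $[0,1]$ whose mean is $p_{a,C}$.

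I will invoke \citet[Corollary~2.5]{bardenet2015concentration}. In its two-sided form it gives, for each $a$ and each $\delta_a\in(0,1)$,
\[
\Pr\!\left(|p_{a,S}-p_{a,C}|\ge \sqrt{\tfrac{\rho_a\ln(2/\delta_a)}{2n_a}}\,\middle|\, n_a\right)\le \delta_a .
\]
Here $\rho_a$ is exactly the case-defined factor of the statement: the branch $n_a\le N_a/2$ comes from the Serfling-type bound, and the branch $n_a>N_a/2$ from the reverse-martingale refinement (Theorem~2.4). The range of the variables is $b-a=1$.

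The main obstacle is this step, namely matching constants exactly. Bardenet--Maillard state a one-sided bound. I will apply it to both $X$ and $1-X$ and take a union bound to obtain the two-sided version, which costs the factor $2$ inside the logarithm. I must also check that the piecewise $\rho_a$ matches their statement, in particular that the $n_a>N_a/2$ branch is the one from Theorem~2.4. Choosing $\delta_a=\delta/2$ then turns $\ln(2/\delta_a)$ into $\ln(4/\delta)$.

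\emph{Combining the two groups.} Set $t_a:=\sqrt{\rho_a\ln(4/\delta)/(2n_a)}$, so that $\gamma_{\mathrm{FPC}}(\delta)=t_1+t_0$. By the decomposition $d_S-d_C=(p_{1,S}-p_{1,C})-(p_{0,S}-p_{0,C})$ and the triangle inequality,
\[
\{|d_S-d_C|\ge t_1+t_0\}\subseteq\{|p_{1,S}-p_{1,C}|\ge t_1\}\cup\{|p_{0,S}-p_{0,C}|\ge t_0\}.
\]
A union bound then gives $\Pr(|d_S-d_C|\ge\gamma_{\mathrm{FPC}}\mid n_0,n_1)\le \delta/2+\delta/2=\delta$. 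This uses an asymmetric split of $\gamma$ instead of the $\gamma/2$ split of Appendix~\ref{app:proof_manip_sufficient_condition}, which is precisely why $\gamma_{\mathrm{FPC}}$ is a sum of two terms.

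\emph{Passing the audit.} If $|d_C|\le\varepsilon-\gamma_{\mathrm{FPC}}$, then on the complement of that event we have $|d_S|\le|d_C|+|d_S-d_C|<\varepsilon$. Therefore $\Pr(|d_S|\le\varepsilon\mid n_0,n_1)\ge1-\delta$.

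\emph{Counting flips.} Assume without loss of generality that $d_{C,\text{true}}>0$. Each flip changes $p_{1,C}$ by at most $1/N_1$ or $p_{0,C}$ by at most $1/N_0$, so it decreases $d_C$ by at most $1/N_{\min}$. Reaching $d_C\le\varepsilon-\gamma_{\mathrm{FPC}}$ requires a total decrease of $|d_{C,\text{true}}|-\varepsilon+\gamma_{\mathrm{FPC}}$. Hence at least $(|d_{C,\text{true}}|-\varepsilon+\gamma_{\mathrm{FPC}})N_{\min}$ flips are needed, and by integrality at least its ceiling, as stated.
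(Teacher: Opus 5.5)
Your proposal is correct and follows the paper's proof essentially step for step. Both arguments use Bardenet--Maillard's Corollary~2.5, applied to both tails and made two-sided at level $\delta/2$ per group, then the asymmetric split $\gamma_{\mathrm{FPC}}=t_1+t_0$ with a union bound over the two groups and the triangle inequality, and finally the same $1/N_{\min}$ single-flip sensitivity argument for the flip count.
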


\begin{proof}
Conditioning on $n_a$, apply \citet[Corollary~2.5]{bardenet2015concentration} to the finite population of $N_a$ binary predictions in group $a$. 
In their notation, the population mean is $p_{a,C}$, the sample mean is $p_{a,S}$, the sample size is $n_a$, and the range satisfies $b-a=1$. 
Thus, with probability at least $1-\delta$,
\[
p_{a,S}-p_{a,C}
\leq
\sqrt{\frac{\rho_a\log(1/\delta)}{2n_a}}.
\]
Equivalently, setting
$\delta=\exp(-2n_at^2/\rho_a)$ gives, for every $t>0$,
\[
\Pr\!\left(
p_{a,S}-p_{a,C}>t
\,\middle|\,
n_a
\right)
\leq
\exp\!\left(-\frac{2n_at^2}{\rho_a}\right).
\]
The same bound applies to the opposite deviation
$p_{a,C}-p_{a,S}$. 
A union bound over the two tails therefore yields
\begin{align}
\Pr\!\left(
|p_{a,S}-p_{a,C}|\geq t
\,\middle|\,
n_a
\right)
\leq
2\exp\!\left(
-\frac{2n_at^2}{\rho_a}
\right).\label{eq:tighter_bound_1}
\end{align}

For each $a\in\{0,1\}$, set
\[
t_a
:=
\sqrt{\frac{\rho_a\ln(4/\delta)}{2n_a}}.
\]
Substituting $t=t_a$ into Equation \eqref{eq:tighter_bound_1} gives
\[
\Pr\!\left(
|p_{a,S}-p_{a,C}|>t_a
\,\middle|\,
n_a
\right)
\leq
2\exp\!\left(-\ln(4/\delta)\right)
=
\frac{\delta}{2}.
\]
A union bound over the two protected groups therefore shows that,
with probability at least $1-\delta$,
\[
|p_{1,S}-p_{1,C}|\leq t_1
\qquad\text{and}\qquad
|p_{0,S}-p_{0,C}|\leq t_0.
\]
On this event,
\begin{align*}
|d_S-d_C|
&=
\left|
(p_{1,S}-p_{1,C})
-
(p_{0,S}-p_{0,C})
\right| \\
&\leq
|p_{1,S}-p_{1,C}|
+
|p_{0,S}-p_{0,C}| \\
&\leq
t_1+t_0
=
\gamma_{\mathrm{FPC}}(\delta).
\end{align*}
Consequently, if
\[
|d_C|
\leq
\varepsilon-\gamma_{\mathrm{FPC}}(\delta),
\]
then
\[
|d_S|
\leq
|d_C|+|d_S-d_C|
\leq
\varepsilon
\]
with probability at least $1-\delta$.

Finally, a single output flip changes $d_C$ by at most
\[
\max\left(\frac{1}{N_0},\frac{1}{N_1}\right)
=
\frac{1}{N_{\min}}.
\]
Enforcing the sufficient condition above therefore requires reducing
the candidate-set gap by at least
\[
|d_{C,\text{true}}|
-\varepsilon
+\gamma_{\mathrm{FPC}}(\delta).
\]
It consequently requires at least
\[
\left\lceil
\left(
|d_{C,\text{true}}|
-\varepsilon
+\gamma_{\mathrm{FPC}}(\delta)
\right)N_{\min}
\right\rceil
\]
output flips as claimed.
\end{proof}

Since $\rho_a\leq1$ and $n_a\geq n_{\min}$ for each
$a\in\{0,1\}$,
\begin{align*}
\gamma_{\mathrm{FPC}}(\delta)
&=
\sqrt{\frac{\rho_1\ln(4/\delta)}{2n_1}}
+
\sqrt{\frac{\rho_0\ln(4/\delta)}{2n_0}} \\
&\leq
\sqrt{\frac{\ln(4/\delta)}{2n_1}}
+
\sqrt{\frac{\ln(4/\delta)}{2n_0}} \\
&\leq
\sqrt{\frac{2\ln(4/\delta)}{n_{\min}}}.
\end{align*}
The finite-population correction thus yields a tighter concentration margin than the ordinary Hoeffding bound, particularly when a large fraction of either protected group is sampled. 
This results in a less conservative sufficient threshold for the number $m_{\mathrm{FPC}}$ of output flips required for the platform to pass the audit with sufficiently high probability.

\end{document}